
\documentclass{article}


\PassOptionsToPackage{numbers, compress}{natbib}
\usepackage{hyperref}       
\usepackage{url}            
\usepackage{booktabs}       
\usepackage{amsfonts}       
\usepackage{nicefrac}       
\usepackage{microtype}      
\usepackage{xcolor}         
\usepackage{algorithm}
\usepackage{algorithmic}
\usepackage{wrapfig}
\usepackage{amsmath, amsthm, mathtools, bbm}
\usepackage{graphicx}
\usepackage{subcaption}
\usepackage{enumitem}



\usepackage[accepted]{icml2023}


\usepackage[capitalize,noabbrev]{cleveref}

\theoremstyle{plain}

\theoremstyle{definition}

\theoremstyle{remark}

\newcommand{\cM}{\mathcal{M}}
\newcommand{\cS}{\mathcal{S}}
\newcommand{\cA}{\mathcal{A}}
\newcommand{\cR}{\mathcal{R}}
\newcommand{\cP}{\mathcal{P}}
\newcommand{\cJ}{\mathcal{J}}
\newcommand{\iirchi}[2]{\raisebox{0.6\depth}{$#1\chi$}}
\DeclareRobustCommand{\lchi}{{\mathpalette\iirchi\relax}}

\newtheorem{Proposition}{Proposition}

\providecommand{\customgenericname}{}
\newcommand{\newcustomtheorem}[2]{%
  \newenvironment{#1}[1]
  {%
   \renewcommand\customgenericname{#2}%
   \renewcommand\theinnercustomgeneric{##1}%
   \innercustomgeneric
  }
  {\endinnercustomgeneric}
}
\newcustomtheorem{proposition}{Proposition}

\newcommand{\shaper}{Adversary}
\newcommand{\shapers}{Adversaries}
\newcommand{\learner}{Victim}
\newcommand{\learners}{Victims}

\newcommand{\psname}{Cheap Talk MDP}
\newcommand{\fname}{message}

\DeclareMathAlphabet\mathbfcal{OMS}{cmsy}{b}{n}

\usepackage[textsize=tiny]{todonotes}

\icmltitlerunning{Adversarial Cheap Talk}

\begin{document}

\twocolumn[
\icmltitle{Adversarial Cheap Talk}



\icmlsetsymbol{equal}{*}

\begin{icmlauthorlist}
\icmlauthor{Chris Lu}{sch}
\icmlauthor{Timon Willi}{sch}
\icmlauthor{Alistair Letcher}{yyy}
\icmlauthor{Jakob Foerster}{sch}
\end{icmlauthorlist}

\icmlaffiliation{yyy}{aletcher.github.io}
\icmlaffiliation{sch}{FLAIR, University of Oxford}

\icmlcorrespondingauthor{Chris Lu}{christopher.lu@eng.ox.ac.uk}

\icmlkeywords{Machine Learning, ICML}

\vskip 0.3in
]



\printAffiliationsAndNotice{}  

\begin{abstract}
Adversarial attacks in reinforcement learning (RL) often assume highly-privileged access to the victim's parameters, environment, or data. Instead, this paper proposes a novel adversarial setting called a \textit{\psname} in which an \shaper\ can merely append deterministic messages to the \learner's observation, resulting in a \textit{minimal range of influence}. The \shaper\ cannot occlude ground truth, influence underlying environment dynamics or reward signals, introduce non-stationarity, add stochasticity, see the \learner's actions, or access their parameters. Additionally, we present a simple meta-learning algorithm called Adversarial Cheap Talk (ACT) to train Adversaries in this setting. We demonstrate that an \shaper\ trained with ACT still significantly influences the \learner's \textit{training and testing} performance, despite the highly constrained setting. Affecting train-time performance reveals a new attack vector and provides insight into the success and failure modes of existing RL algorithms. More specifically, we show that an ACT \shaper\ is capable of \textit{harming} performance by interfering with the learner's function approximation, or instead \textit{helping} the \learner's performance by outputting useful features. Finally, we show that an ACT \shaper\ can manipulate messages during train-time to directly and arbitrarily control the \learner\ at test-time. Project video and code are available at \url{https://sites.google.com/view/adversarial-cheap-talk}.
\end{abstract}

\section{Introduction}
Learning agents are often trained in settings where adversaries are not able to influence underlying environment dynamics or reward signals, but may influence part of the agent's observations. For instance, adversaries can alter and place objects, such as interactive billboards, that will appear in the background of self-driving car datasets. Similarly, adversaries may append arbitrary tags to content that will be used to train recommender systems. In financial markets, adversaries can alter the state of the order-book by submitting orders far from the mid.
While these features are `useless' from an information-theoretic point of view, it is common practice in end-to-end deep learning to include them as part of the input and let the model learn which features matter. For instance, \citet{lange2022lottery} showed that many features in common RL environments are unnecessary for solving the task. 
Furthermore, self-driving cars typically do not omit useless parts of the visual input but instead learns to ignore them through training. 
Because they are unimportant, influencing these features often does not require highly-privileged access to the environment. Surprisingly, this paper demonstrates that an actor can still heavily influence the behaviour and performance of learning agents by controlling information \textit{only} in these `useless' channels, \textit{without} knowledge of the agent's parameters or training state. 

Most past work in adversarial RL assumes highly-privileged access to the victim. For example, many attacks assume the ability to arbitrarily perturb all of the features in the victim's observation space \citep{huang2017advatt, kos2017delving} or access to a pre-trained victim's parameters \citep{gleave2020advpolicies, wang2022adversarial}. Not only is access to these features often unrealistic in practical settings, they also enable trivial attacks. By perturbing observations, an Adversary can obscure relevant information, such as the ball in a Pong game, rendering the task unsolvable. By accessing a pretrained victim's weights, the Adversary can unsurprisingly generate out-of-distribution inputs that the victim has not observed \citep{gleave2020advpolicies}. Finally, most of these attacks only cause the victim's policy to fail, instead of allowing the adversary to arbitrarily control the victim's behaviour \citep{gu2017badnets, kiourti2020trojdrl, salem2020dynbd, ashcraft2021poisoning, zhang2021optimal}.

We show that an Adversary with extremely limited access can arbitrarily control the performance and behaviour of \textit{learning} agents. To do this, we propose a novel minimum-viable setting called \textit{\psname}s. Adversaries are only allowed to modify `useless' features that are appended to the Victim's observation as a deterministic function of the current state. These features represent parts of an agent's observations that are unrelated to rewards or transition dynamics. In particular, our model applies to Adversaries adding tags to content in recommender systems, or renting space on interactive billboards, or submitting orders far out of the money in financial markets. The setting is \textit{minimal} in that Adversaries cannot use these features to occlude ground truth, influence environment dynamics or reward signals, inject stochasticity, introduce non-stationarity, see the \learner's actions, or access their parameters.

Cheap Talk MDPs are formalised in Section \ref{sec:setting}, and we further justify minimality by proving in Proposition \ref{prop:tabular} that \shapers\ cannot influence \textit{tabular} \learners\ \textit{whatsoever} in Cheap Talk MDPs. In particular, it follows that \shapers\ can only influence \learners\ by interfering with their function approximator.

Despite these restrictions, we show that Adversaries can still heavily influence agents parameterised by neural networks. In Section \ref{sec:method}, we introduce a new meta-learning algorithm called Adversarial Cheap Talk (ACT) to train the \shaper. With an extensive set of experiments, we demonstrate in Section \ref{sec:results} that an ACT \shaper\ can manipulate a \learner\ to achieve a number of outcomes:

\begin{enumerate}

\item An ACT \shaper\ can prevent the \learner\ from solving a task, resulting in low rewards \textit{during training}. We provide empirical evidence that the \shaper\ successfully sends messages which induce \textit{catastrophic interference} in the \learner's neural network.

\item Conversely, an ACT \shaper\ can learn to send useful messages that \textit{improve} the \learner's training process, resulting in higher rewards \textit{during training}.

\item Finally, we introduce a training scheme that allows the ACT \shaper\ to directly and arbitrarily control the \learner\ directly \textit{at test-time}.
\end{enumerate}


\section{Related Work}

\subsection{Test-Time Adversarial Attacks}
Most work investigating adversarial attacks on deep RL systems focuses on attacks at test-time, i.e. those that attack a fully trained, static policy.  \citet{sun2021strongest}, \citet{huang2017advatt},  \citet{kos2017delving}, \citet{zhang2020robust} and \citet{zhang2021optimal} investigate adversarial attacks that influence test-time performance by directly perturbing the observation space. Unlike in Cheap Talk MDPs, such perturbations can influence the underlying dynamics by obscuring relevant information. \citet{gleave2020advpolicies}, \citet{wang2022adversarial}, and \citet{guo2021adversarial} investigate adversarial attacks that influence test-time performance of reinforcement learning agents that were trained in self-play. In contrast to our method, the adversarial agents can directly interact with the environment and the victim agent. The aforementioned test-time attacks largely work by generating perturbations that push the observations out of the Victim's training distribution. In contrast, in Cheap Talk MDPs, the \learner\ \textit{trains directly with the static adversarial features}; thus, by definition, the \shaper\ cannot generate out-of-distribution or non-stationary inputs. 

\subsection{Train-Time Adversarial Attacks}
In contrast to test-time adversarial attacks in RL, in train-time adversarial attacks the Adversary interacts with a \textit{learning} victim. \citet{pinto2017rarl} simultaneously trains an adversary alongside a reinforcement learning agent to robustify the victim's policy. Unlike in this work, the adversary is able to directly apply perturbation forces to the environment and introduce nonstationarity. We make further comparisons in Section \ref{sec:train-time-influence}. Backdoor attacks in reinforcement learning aim to introduce a vulnerability during train-time, which can be triggered at test-time. \citet{kiourti2020trojdrl} and \citet{ashcraft2021poisoning} assume the adversary can directly and fully modify the victim's observations and rewards in order to discretely insert a backdoor that triggers on certain inputs. \citet{wang2021backdoorl} considers the multi-agent setting where the adversary inserts a backdoor using its behaviour in the environment. Unlike in this work, the adversary can influence the underlying environment dynamics. Each of these backdoor attacks simply cause the victim to fail when triggered. In contrast, we use the backdoor to fully control the victim.

\subsection{Failure Modes in Deep Reinforcement Learning}
Previous works have shown that using neural networks as function approximators in reinforcement learning often results in multiple failure modes due to the non-stationarity of value function bootstrapping \citep{vanHasselt2018deadlytriad}. In particular, works have shown that catastrophic interference \citep{bengio2020td}, capacity loss \citep{lyle2022understanding}, and primacy bias \citep{nikishin2022primacy} often occur, even within a single episode of an environment \citep{fedus2020catastrophic}. \citet{song2020observational} shows that deep reinforcement learning algorithms can often overfit to spurious correlations in the observation space. \citet{shah2022goal} and \citet{di2022goal} show that RL agents can pursue undesirable goals at test time if they correlate with the reward during training. By appending to the observation space, we learn to induce the observational failure modes described in these works.

\subsection{Opponent Shaping / Cheap Talk}
Our method is closely related to the field of opponent shaping. Originally, most opponent shaping algorithms assumed white-box access to their opponents to shape the flow of the opponent's gradient \citep{foerster2018learning, letcher2019differentiable, letcher2019stable, willi2022cola}. Instead, \citet{lu2022mfos} introduce a method to shape opponents without white-box access. However, they still deploy an agent to interact directly in the environment.
Cheap talk channels \citep{crawford1982cheaptalk} in deep reinforcement learning have been used to learn emergent communication \citep{foerster2016l2c} and to solve coordination problems \citep{cao2018negotiation}. To the best of our knowledge, this paper is the first to use a cheap talk channel (and only a cheap talk channel) to shape the behaviour of learning agents.

\section{Background}
\subsection{Reinforcement Learning}
A Markov decision process (MDP) consists of a tuple 
$\mathcal{D}=\langle\mathcal{S}, \mathcal { A }, \mathcal{P}, \mathcal { R }, \gamma\rangle$, where $\mathcal{S}$ denotes the state space, $\mathcal{A}$ represents the action space, $\mathcal{P}: \mathcal{S} \times \mathcal{A} \times \mathcal{S} \mapsto [0,1]$ denotes the state transition probability function, $\mathcal{R}: \mathcal{S} \times \mathcal{A} \mapsto \mathbb{R}$ is the reward function and $\gamma \in [0, 1)$ denotes the discount factor. At every timestep $t$, an agent samples an action from its stochastic policy $a_{t} \sim \pi_{\theta}\left(\cdot \mid s_{t}\right)$, where $a_t \in \mathcal{A}$, $s_t \in \mathcal{S}$ and $\theta$ denotes the policy parameterisation. The agent then receives a reward based on the action taken in the current state: $r_{t}=\mathcal{R}\left(s_{t}, a_{t}\right)$. Finally, a new state is sampled according to the transition function $s_{t+1} \sim \mathcal{P}\left(\cdot \mid s_{t}, a_{t}\right)$, resulting in a trajectory $\tau_{\theta}:=\left( \left(s_{0}, a_{0}, r_{0}\right),\left(s_{1}, a_{1}, r_{1}\right), \ldots \right)$. The agent's goal is to maximise its expected discounted return under policy $\pi_\theta$: 
\begin{equation}
\label{eq:obj_func}
J(\theta) = \mathbb{E}_{\pi_\theta}\left[\sum_{t=0}^{\infty}\gamma^{t}r_{t}\right] \,.
\end{equation}

\subsection{Evolution Strategies}
Evolution Strategies \citep[ES]{salimans2017evolutionstrategies} is a derivative-free optimisation method. Let $F : \mathbb{R}^d \rightarrow \mathbb{R}$  be some function we want to optimise over. Instead of optimising $F(\mathbf{x})$ directly, we first blur the objective to
\begin{equation*}
    \mathbb{E}_{\mathbf{\epsilon} \sim N(\mathbf{0}, I_d)}[F(\mathbf{x} + \sigma \mathbf{\epsilon})] \,,
\end{equation*}
where $\sigma$ is a hyper-parameter dictating the amount of Gaussian noise we add. This is useful because
\begin{equation*}
    \nabla_{\mathbf{x}}\mathbb{E}_{\mathbf{\epsilon} \sim N(\mathbf{0}, I_d)}[F(\mathbf{x} + \sigma \mathbf{\epsilon})] = \mathbb{E}_{\mathbf{\epsilon} \sim N(\mathbf{0}, I_d)}\left[\frac{\mathbf{\epsilon}}{\sigma}F(\mathbf{x} + \sigma \mathbf{\epsilon})\right] \,,
\end{equation*}
which allows us to optimise a non-differentiable function using gradient descent techniques in a highly scalable manner. In settings such as Meta-RL, ES allows us to optimise objectives that would be challenging to tackle using meta-gradients. In particular, taking meta-gradients through the entire training trajectory of an RL agent would require taking a meta-gradient through thousands of updates, which is often cumbersome or intractable 
\citep{metz2021gradients}. 

\section{Problem Setting}\label{sec:setting}
The setting we propose is of two agents interacting in a \textit{\psname} $\langle\cS, \cA, \cP, \cR, \gamma, \cM, f, \cJ \rangle$, which is effectively an MDP with an augmented state space, whereby features (messages) from a Cheap Talk channel $\cM$ are appended to the states of the original MDP. We refer to the agent which observes the augmented state as the \textit{\learner}, with transition and reward functions $\cP, \cR$ assumed to be independent from $\cM$. Formally, this means that $\cP(\cdot \mid s, m, a) = \cP(\cdot \mid s, m', a)$ and $\cR(s, m, a) = \cR(s, m', a)$ for all messages $m, m' \in \cM$. The agent appending the \fname\ is called the \textit{\shaper}, and is endowed with a deterministic policy $f : \cS \to \cM$ to append messages $m = f(s)$ and an objective function $\cJ$ to optimise (details below).

The \learner\ is a standard reinforcement learning agent, selecting actions according to its policy $a_{t} \sim \pi_{\theta}(\cdot \mid s, f(s))$, where $a \in \mathcal{A}, s \in \cS$. The \learner\ optimises its policy $\pi_{\theta}$ with respect to parameters $\theta$ in order to maximise its expected return $J$ as defined in Equation \ref{eq:obj_func}.

By contrast, the \shaper\ may only act by modifying the cheap talk channel features $f_\phi(s)$ at $s$ at every step, where $f_\phi : \cS \to \cM$ is a deterministic policy (function) of the current state and $\phi$ are the \shaper's parameters. These parameters may only be updated \textit{between} full training/testing episodes of the \learner; the function remains static during episodes to avoid introducing non-stationarity and thus restrict the Adversary's range of influence. The \shaper's objective function $\cJ$ may be picked arbitrarily, and need not be differentiable if it is optimised using ES.

In our train-time experiments we focus on the fully-adversarial setting where objectives are zero-sum, $\cJ = -J$, and the allied setting where objectives are equal, $\cJ =  J$. In test-time experiments we use an entirely different objective, such as reaching for an arbitrary circle in Reacher (see Figure \ref{fig:reacher_goal}). This incentivises the \shaper\ to manipulate the \learner\ into maximising $\cJ$, even if this comes at the cost of the \learner's original objective $J$.


\subsection{Minimality of Cheap Talk MDPs}\label{subsec:minimality}

To justify our introductory claim that Cheap Talk MDPs only allow for a minimal range of influence, we first prove that \shapers\ cannot influence \learners\ \textit{whatsoever} in the tabular setting, irrespective of the Victim's learning algorithm. It follows that \shapers\ can \textit{only} attack Victims by interfering with their function approximator.

\begin{Proposition}\label{prop:tabular}
In any Cheap Talk MDP, the policy of a \textbf{tabular} \learner\ is independent from its Adversary provided uniform initialisation along $\cM$, namely $\pi_0(\cdot \mid s_i, m_j) = \pi_0(\cdot \mid s_i, m_{j'}) \ \forall \ j, j'$.
\end{Proposition}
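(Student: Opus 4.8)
The plan is to exploit the fact that messages are a \emph{deterministic function of the state}. Whenever the \learner\ is in a raw state $s$ it observes exactly the augmented state $(s, f(s))$, so the only augmented states ever encountered along any trajectory are the ``diagonal'' ones, $\{(s, f(s)) : s \in \cS\}$, and since $f$ is a function the map $\Phi_f : s \mapsto (s, f(s))$ is a bijection from $\cS$ onto this set. Because $\cP$ and $\cR$ are independent of the message, $\Phi_f$ intertwines the dynamics of the \psname\ restricted to reachable augmented states with the dynamics of the plain MDP $\langle \cS, \cA, \cP, \cR, \gamma\rangle$; and uniform initialisation along $\cM$ says precisely that the \learner's initial table, pulled back along $\Phi_f$, does not depend on $f$. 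Morally, running a \textbf{tabular} \learner\ against \shaper\ $f$ is the \emph{same process} as running it in the $f$-free MDP, up to the relabelling $\Phi_f$ of states, so its policy cannot depend on $f$.

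To make this precise I would argue by induction on the number $k$ of the \learner's learning updates, coupling all sources of randomness (initial states, action sampling, environment transitions, and any randomness internal to the learning rule) across the runs induced by two arbitrary \shaper\ functions $f$ and $f'$. The inductive hypothesis at step $k$ is that the \emph{projected} table $s \mapsto \pi_k(\cdot \mid s, f(s))$ and the trajectory data collected so far agree, entry for entry, between the two runs; the base case $k=0$ is exactly the uniform-initialisation assumption, $\pi_0(\cdot \mid s, f(s)) = \pi_0(\cdot \mid s, f'(s))$ for all $s$. For the inductive step: (i) during each episode the \learner\ in raw state $s_t$ samples $a_t \sim \pi_k(\cdot \mid s_t, f(s_t))$, which agrees by hypothesis, and receives $r_t = \cR(s_t, a_t)$ and $s_{t+1} \sim \cP(\cdot \mid s_t, a_t)$, neither of which sees the message, so the raw trajectory $((s_0, a_0, r_0), (s_1, a_1, r_1), \dots)$ is identical across the two runs; and (ii) a tabular update modifies the entry at an augmented state using only trajectory-local information about that augmented state together with the current values at augmented states appearing in the trajectory, all of which are diagonal and hence in $\Phi_f$-bijection with raw states, so $\pi_{k+1}(\cdot \mid s, f(s))$ is produced from quantities that already agree, and the hypothesis propagates. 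Since the \learner\ only ever acts at diagonal augmented states, its realised policy at every stage equals this projected table, which we have just shown is $f$-independent.

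The crux, and the step I would be most careful about, is pinning down ``tabular'' at the right level of generality so that the argument is genuinely \emph{algorithm-agnostic}: we need the \learner's update to a given augmented-state entry to depend only on trajectory-local data (the visits, actions, rewards and successors occurring in the trajectory, and the current entries at those augmented states) and not to leak the identity of $f$ through, e.g., reads of never-visited entries; uniform initialisation along $\cM$ is exactly the hypothesis that neutralises any residual dependence at step $0$. It is worth remarking that both hypotheses are necessary and the proposition is therefore tight: without ``tabular'' a shared function approximator entangles diagonal and non-diagonal inputs, and without uniform initialisation the \learner\ could read off $f(s_0)$ at the first step and condition on it. It is precisely this gap, the function-approximation regime, that the remainder of the paper exploits.
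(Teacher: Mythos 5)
Your proposal is correct and follows essentially the same route as the paper's proof: an induction over learning updates comparing two runs against different message functions, with uniform initialisation along $\cM$ as the base case and the inductive step resting on the facts that only ``diagonal'' augmented states $(s, f(s))$ are ever visited and that $\cP, \cR$ are independent of the message. Your explicit remark that the tabular update must be trajectory-local (so that $f$ cannot leak through reads of never-visited entries) is a slightly more careful statement of an assumption the paper makes implicitly when it takes the update to be a function only of transitions and returns, but it does not change the argument.
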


\begin{proof}[Proof (Sketch)]
The main intuition is that policy updates for different states do not interfere with each other in tabular settings. Assuming uniform initialisation and noticing that the only states encountered in the environment are of the form $(s, f(s))$, we deduce that the Victim's policy updates are independent from the Adversary's choice of function $f$. Formal proof in Appendix \ref{app:proof1}.
\end{proof}

We also prove more generally that Adversaries cannot prevent Victims with optimal convergence guarantees to converge to optimal rewards, even in non-tabular settings. They may however still harm the Victim by slowing down their convergence rate significantly.

\begin{Proposition}\label{prop:optimal}
A \learner\ which is \textbf{guaranteed to converge to optimal policies in MDPs} will also converge to optimal policies in Cheap Talk MDPs, with an expected return equal to the optimal return for the corresponding no-channel MDP.
\end{Proposition}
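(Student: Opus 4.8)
The plan is to recognise the Cheap Talk MDP as an ordinary MDP whose \emph{reachable} part is isomorphic to the underlying no-channel MDP $\mathcal{D} = \langle \cS, \cA, \cP, \cR, \gamma\rangle$, so that the \learner's convergence guarantee transfers with the optimal return unchanged. Fix the \shaper's function $f$ and view the augmented object as a single MDP $\widehat{\mathcal{D}} = \langle \cS\times\cM, \cA, \widehat\cP, \widehat\cR, \gamma\rangle$, where $\widehat\cR((s,m),a) := \cR(s,m,a)$ and $\widehat\cP((s',m') \mid (s,m),a) := \cP(s' \mid s,m,a)\,\mathbbm{1}[m' = f(s')]$. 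Because the \learner\ is assumed to converge to optimal policies in \emph{any} MDP, it converges to optimal policies in $\widehat{\mathcal{D}}$ in particular; the only thing left is to show that the optimal return of $\widehat{\mathcal{D}}$ from its initial state equals that of $\mathcal{D}$.

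First I would note that, since $f$ is deterministic and the initial augmented state has the form $(s_0, f(s_0))$, every state ever visited in $\widehat{\mathcal{D}}$ lies in the graph $G := \{(s, f(s)) : s \in \cS\}$ (after any transition the $\cM$-coordinate is $f$ of the new $\cS$-coordinate), and $s \mapsto (s,f(s))$ is a bijection $\cS \to G$. Restricting $\widehat{\mathcal{D}}$ to $G$ and transporting along this bijection, the independence assumptions $\cP(\cdot \mid s, m, a) = \cP(\cdot \mid s, m', a)$ and $\cR(s, m, a) = \cR(s, m', a)$ make the restricted MDP \emph{identical} to $\mathcal{D}$ up to relabelling of states.

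Hence there is a return-preserving correspondence between policies: $\pi(a \mid s,m)$ on $\widehat{\mathcal{D}}$ corresponds to $\bar\pi(a\mid s) := \pi(a \mid s, f(s))$ on $\mathcal{D}$, and any policy on $\mathcal{D}$ lifts to $\widehat{\mathcal{D}}$ by ignoring the $\cM$-coordinate, in both cases with the same expected discounted return. Taking suprema, the optimal values agree, $\widehat V^\star(s_0, f(s_0)) = V^\star_{\mathcal{D}}(s_0)$, so the \learner\ converges in the Cheap Talk MDP to a policy whose expected return is the optimal no-channel return, which is the claim.

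The main subtlety to handle carefully is the fate of the states in $(\cS\times\cM)\setminus G$, which the \learner\ never observes: a learning algorithm cannot be expected to pin down optimal behaviour on states it never samples, so the convergence guarantee should be read as a statement about the return realised along trajectories (equivalently, about the policy on reachable states). Phrasing the correspondence at the level of trajectory distributions---the \learner's stream of experience in $\widehat{\mathcal{D}}$ is distributionally a relabelled copy of its experience in $\mathcal{D}$---side-steps this, since whatever ``convergence to optimal'' means for $\mathcal{D}$ then holds verbatim here, with attained return $V^\star_{\mathcal{D}}(s_0)$. A minor secondary point is that the Proposition concerns a \emph{fixed} $f$ (it is part of the Cheap Talk MDP tuple), so no non-stationarity is introduced and the single-MDP guarantee genuinely applies; covering an $f$ that varies across the \learner's episodes would additionally require the guarantee to be robust to that particular schedule of changes.
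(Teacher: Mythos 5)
Your proposal is correct and follows essentially the same route as the paper's proof: treat the Cheap Talk MDP as an ordinary MDP so the convergence guarantee applies, then use independence of $\cP,\cR$ from $\cM$ to identify the reachable states $(s,f(s))$ with the no-channel MDP and transfer the optimal return. The only cosmetic difference is that you establish optimality of the induced policy via a return-preserving policy correspondence and a supremum, where the paper verifies the Bellman optimality equation for $Q(s,a)=\bar{Q}(s,f(s),a)$ directly; your remark about unreachable states off the graph of $f$ is a fair point of care that the paper leaves implicit.
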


\begin{proof}[Proof (Sketch)]
Cheap Talk MDPs are just MDPs with augmented state spaces and transition / reward functions; a Victim will therefore converge regardless. Optimality of the expected return follows from the Bellman equation and independence of transition and reward functions from Adversaries. Formal proof in Appendix \ref{app:proof2}.
\end{proof}

Finally, we further justify minimality in Appendix \ref{app:informal} by elaborating informally on the Adversaries' range of influence.
We also discuss the possibility of further weakening Cheap Talk MDPs and conclude that all such variations either bring no advantage or reduce to regular MDPs.

\begin{figure*}[ht!]
 \centering
 \begin{subfigure}[]{0.32\linewidth}
     \centering
	\includegraphics[width=\linewidth]{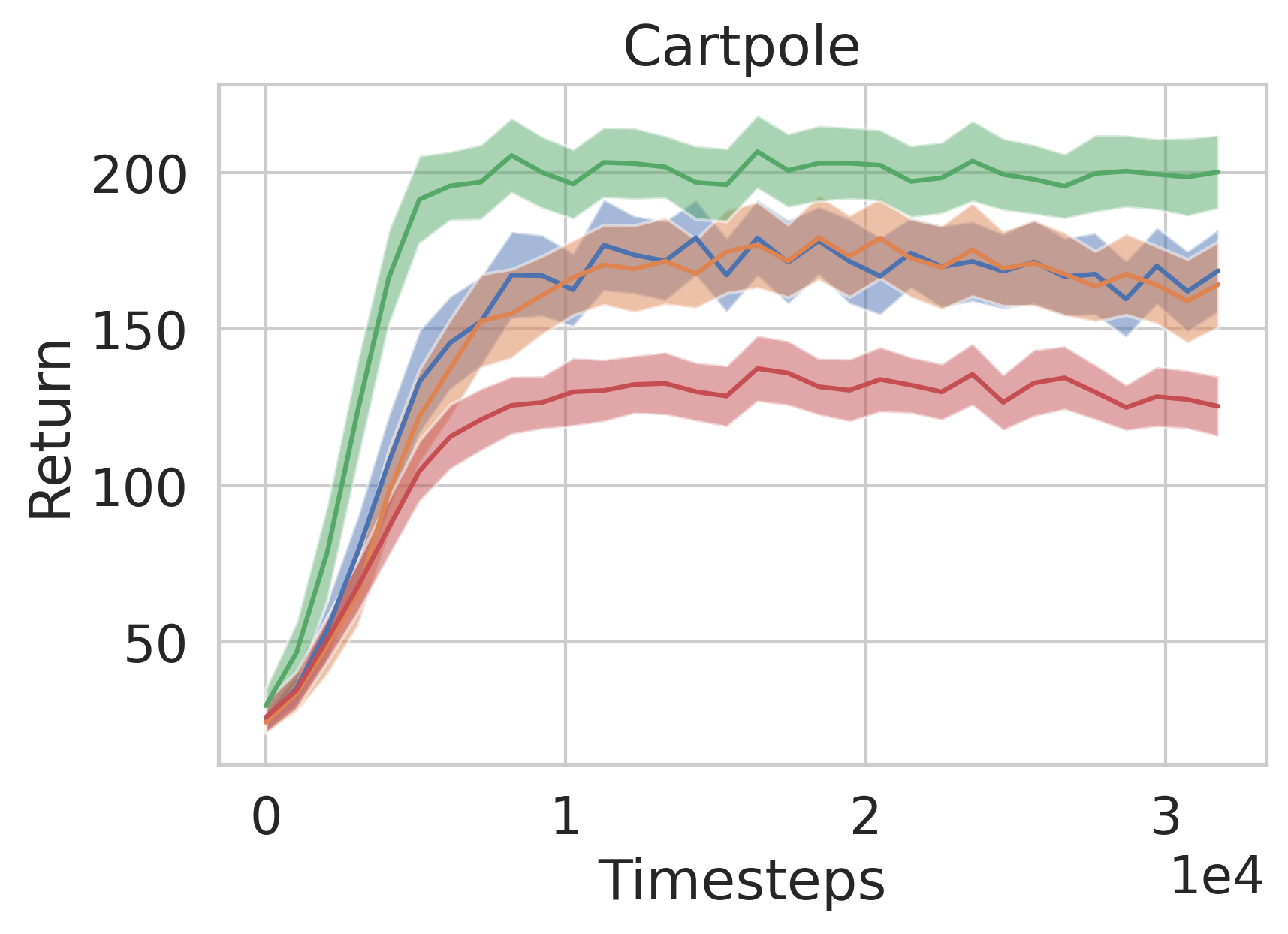}
	\caption{}
	\label{fig:cartpole_train}
 \end{subfigure}
  \begin{subfigure}[]{0.32\linewidth}
     \centering
	\includegraphics[width=\linewidth]{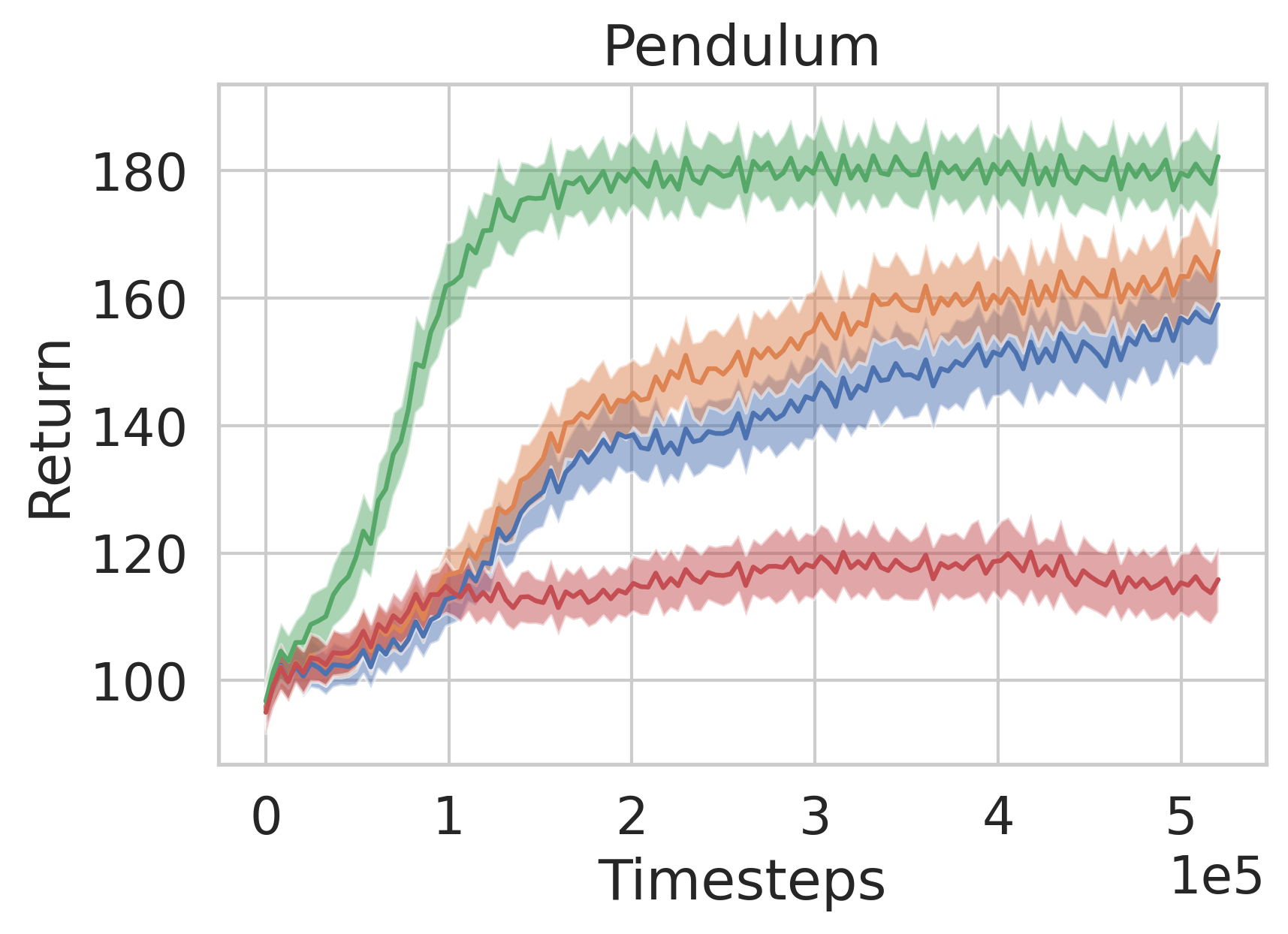}
	\caption{}
	\label{fig:pendulum_train}
 \end{subfigure}
  \begin{subfigure}[]{0.32\linewidth}
     \centering
	\includegraphics[width=\linewidth]{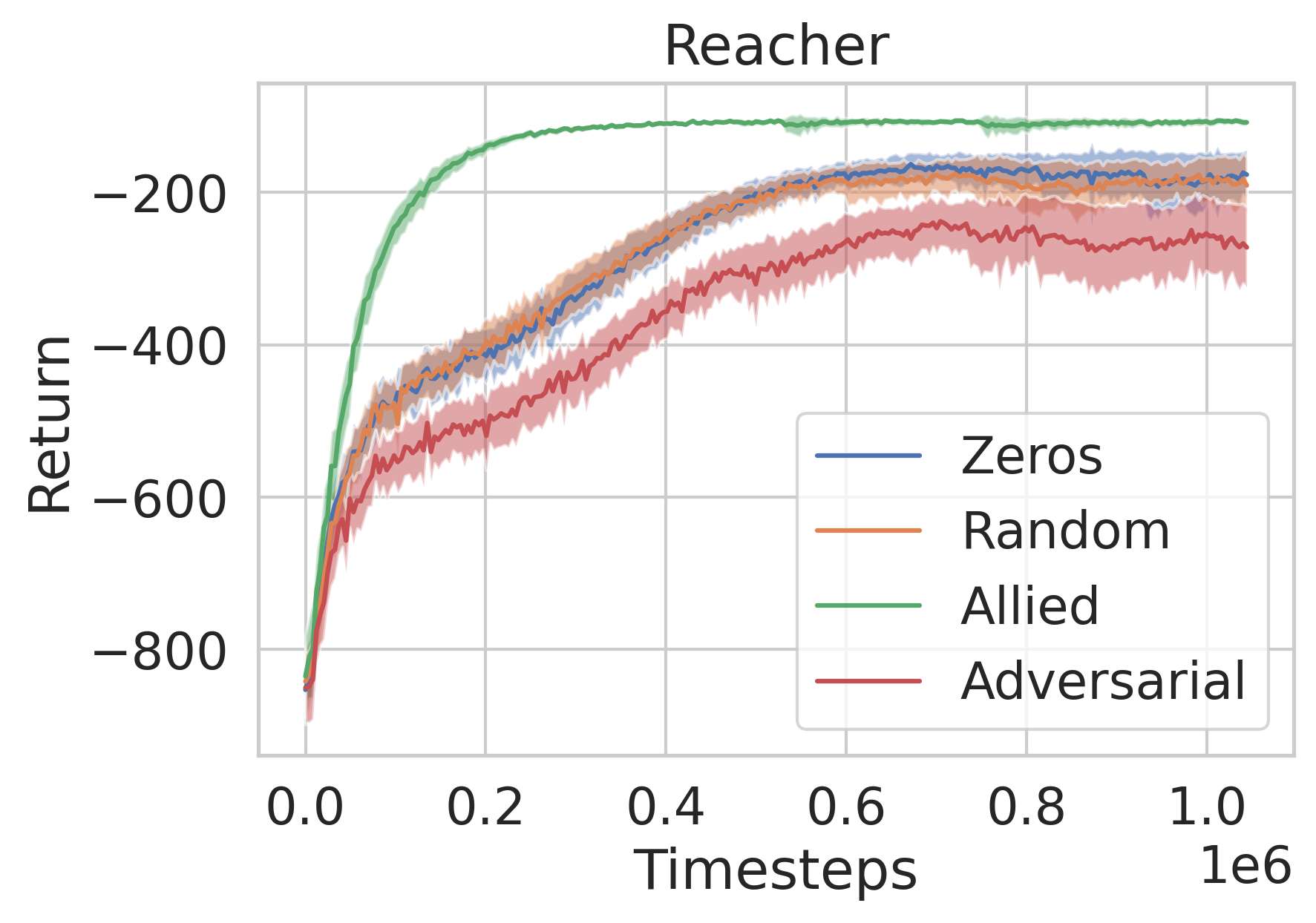}
	\caption{}
	\label{fig:reacher_train}
 \end{subfigure}
\caption{Visualisations of the training curves of the \learner\ across different Adversaries for (a) Cartpole, (b) Pendulum, and (c) Reacher. Error bars denote the standard error across $10$ seeds of Victims trained against a single trained Adversary.}
\label{fig:all_game_trainings}
\end{figure*}

\begin{figure*}[ht]
\hspace{50pt}
 \begin{subfigure}[]{0.32\linewidth}
     \centering
	\includegraphics[width=\linewidth]{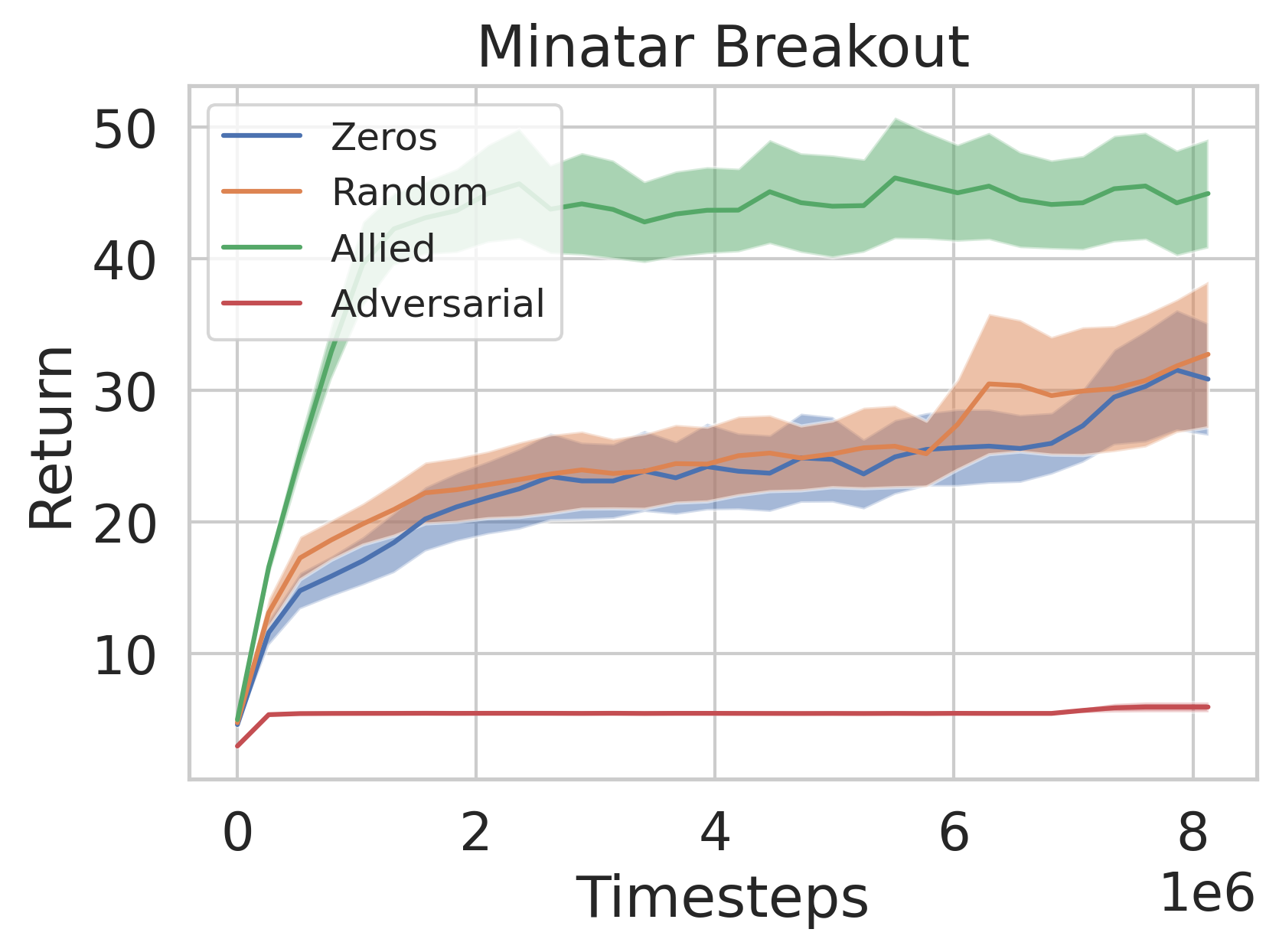}
	\caption{}
	\label{fig:breakout_curves}
 \end{subfigure}\hspace{40pt}
  \begin{subfigure}[]{0.32\linewidth}
     \centering
	\includegraphics[width=\linewidth]{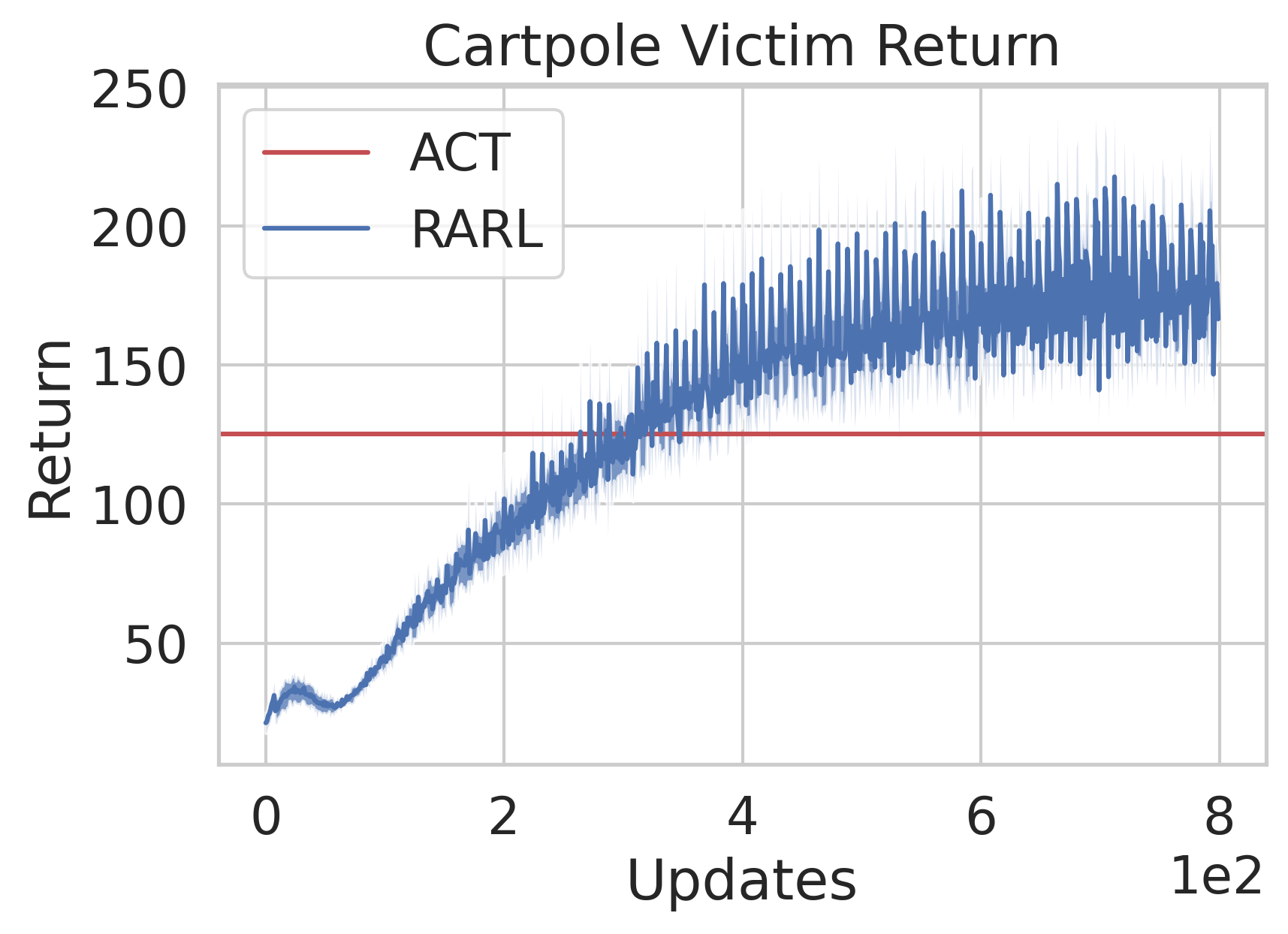}
	\caption{}
	\label{fig:rarl_curves}
 \end{subfigure}
    \caption{(a) Visualisations of the training curves of the Victim in Breakout-Minatar, a higher-dimensional environment, against different Adversaries. (b) Comparing ACT to RARL. Eventually, the Victim learns to overcome the RARL adversary.}
    \label{fig:new_curves}
\end{figure*}

\section{Method}\label{sec:method}

\begin{minipage}{0.5\textwidth}
\begin{algorithm}[H]
\caption{Train-time ACT}
\label{alg:mop}
\begin{algorithmic}[1]
\STATE Set $c = \pm 1$ for allied / adversarial
\STATE Initialise \shaper\ parameters $\phi$
\FOR{$m=0$ {\bfseries to} $M$} 
    \STATE Sample $\phi_n \sim \phi + \sigma\epsilon_{n}$ with $\epsilon_n \sim \mathcal{N}(0, I)$
    \FOR{$n=0$ {\bfseries to} $N$}
    \STATE Initialise \learner\ parameters $\theta$
    \STATE rewards = []
    \FOR{$e=0$ {\bfseries to} $E$} 
        \STATE s = env.reset()
        \WHILE{not done} 
            \STATE $a \sim \pi_\theta(\cdot\mid s, f_{\phi_n}(s))$
            \STATE $r, s$, done = env.step($a$)
            \STATE rewards.append($r$)
        \ENDWHILE
        \STATE Update $\theta$ with PPO to maximise $J$
    \ENDFOR
    \STATE $\cJ_{n} = c \cdot \text{sum(rewards)}/\text{len(rewards)}$
    \ENDFOR
    \STATE Update $\phi$ using ES to maximise $\cJ$
\ENDFOR
\end{algorithmic}
\end{algorithm}
\end{minipage}

\subsection{Meta-Training Procedure}

Our method, Adversarial Cheap Talk (ACT), treats the problem setting as a meta-learning problem. The \shaper's parameters $\phi$ are only updated after \textit{a full training (and testing) run} of the \learner's parameters $\theta$. In other words, $\phi$ is \textit{static} during the whole training run (inner loop) of $\theta$ and only gets updated once the inner loop completes, which prevents the introduction of non-stationarity. In the outer loop, we optimise the \shaper's objective $\cJ$ with respect to $\phi$ using ES as a black-box optimisation technique. Details, including the Cheap Talk channel sizes and the \learners\ hyperparameters can be found in Appendix \ref{app:hyperparameter-details}. 

\subsection{Train-Time Influence}
When influencing train-time performance, we set $J$ to be the agent's mean reward throughout its entire training trajectory. We consider both ``Adversarial" and ``Allied'' versions of ACT, whereby Adversaries try to minimise or maximise $J$ respectively ($\cJ = \pm J$). Pseudocode is provided in Algorithm \ref{alg:mop}, where $E$ is the number of \learner\ training episodes and $N$ is the ES population size.


\subsection{Test-Time Manipulation}
\label{subsec:zero-shot}
When manipulating test-time behaviour, the goal of the \shaper\ is to use the cheap talk features to maximise some arbitrary objective $\cJ$ during the \learner's test-time; however, the \shaper\ may also communicate messages during the \learner's training. Note that $\cJ$ can be \textit{any} objective, including minimising or maximising the \learner's return. Because the train-time and test-time behaviour of the \shaper\ differ significantly, we parameterise them separately (as $\phi$ and $\psi$ respectively), but optimise them jointly.

As an example, consider the Reacher environment (see \ref{fig:reacher_goal}), where the \learner\ is trained to control a robot arm to reach for the blue circle. During the \learner's training, the train-time \shaper\ (parameterised by $\phi$) manipulates the cheap talk features to encode spurious correlations in the \learner's policy. At test-time, the test-time \shaper\ (parameterised by $\psi$) manipulates the cheap talk features to take advantage of the spurious correlations and control the \learner\ to have it reach for the yellow circle instead (the \shaper's objective $\cJ$). More concisely, the train-time \shaper\ wants to \textit{create} a backdoor to make the \learner\ susceptible to manipulation at test-time. The test-time \shaper\ wants to \textit{use} this backdoor to control the \learner. The train-time and test-time \shapers\ ($\phi$ and $\psi$) are co-evolved trained end-to-end to maximise $\cJ$. While such optimisation would be difficult for gradient-based methods due to the long-horizon nature of the problem, ES is agnostic to the length of the optimisation horizon.


Note that the test-time \shaper\ $\psi$ only gets a single shot to maximise $\cJ$ at the end of the \learner's training and does not have access to (and thus cannot train against) the test-time parameters of the \learner\ $\theta^{\prime}$. To describe this formally, let $\mathcal{T}(\theta \mid \phi)$ denote the distribution induced by the inner loop training with the train-time \shaper\ $\phi$ over \learner\ $\theta$. 
Then, in each meta-episode, the test-time \shaper\ $\psi$ interacts with an unseen sample $\theta^{\prime}$ of the distribution over trained \learner s $\theta^{\prime} \sim \mathcal{T}(\cdot \mid \phi)$. In Section \ref{sec:results}, we show that the distribution $\mathcal{T}(\theta \mid \phi)$ has non-trivial variance, suggesting that it is difficult to train against. 
Moreover, in Figure \ref{fig:Pendulum_Vis} (Section \ref{sec:results}), we provide empirical evidence that the train-time \shaper\ learns to reduce the variance of $\mathcal{T}(\theta \mid \phi)$ to help the test-time \shaper. Pseudocode is provided in Algorithm \ref{alg:mop_test}, Appendix \ref{app:pseudo}.

\begin{figure*}[t]
 \centering
 \begin{subfigure}[]{0.32\linewidth}
     \centering
	\includegraphics[width=\linewidth]{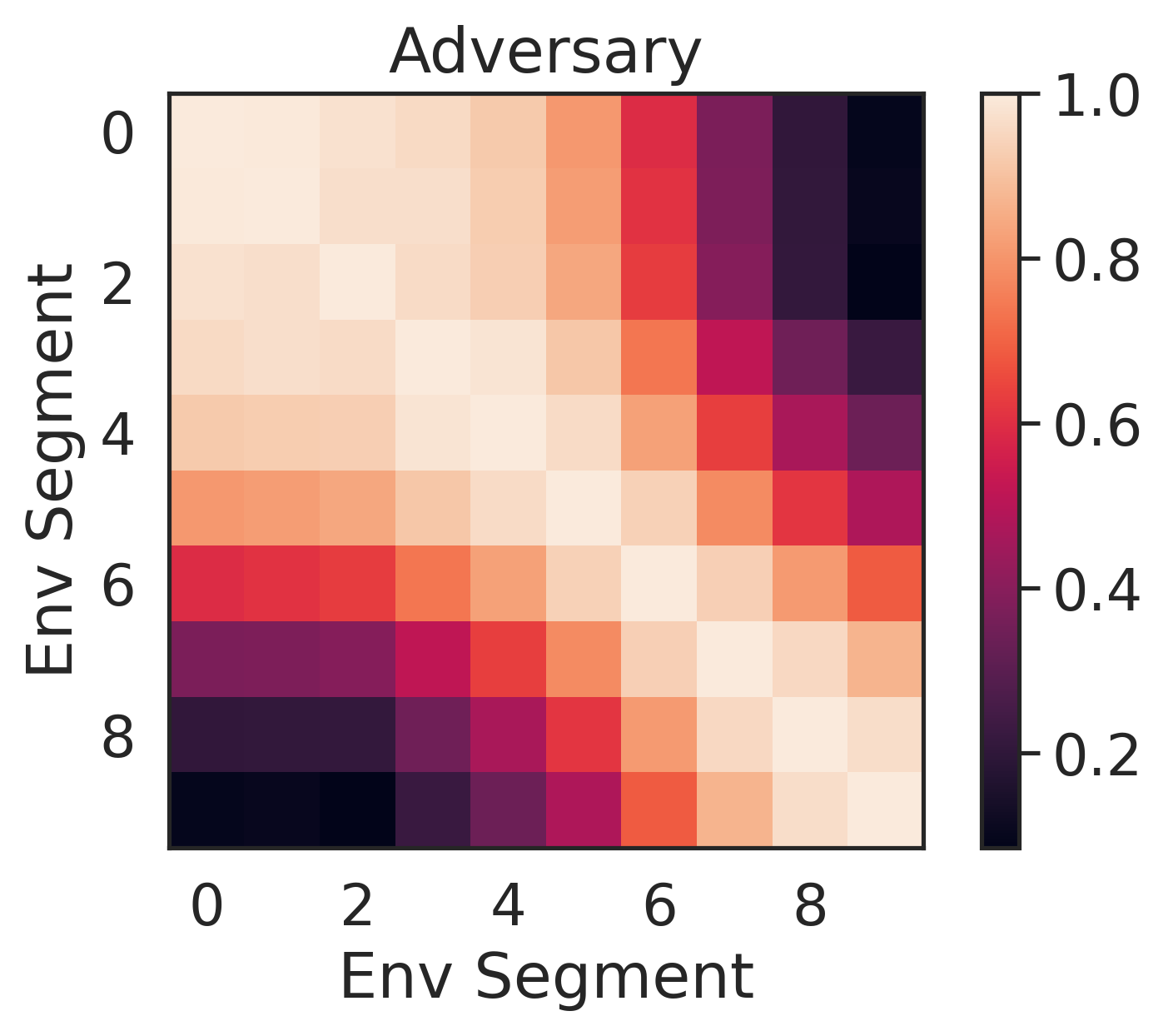}
	\caption{}
	\label{fig:adversary_interference}
 \end{subfigure}
  \begin{subfigure}[]{0.32\linewidth}
     \centering
	\includegraphics[width=\linewidth]{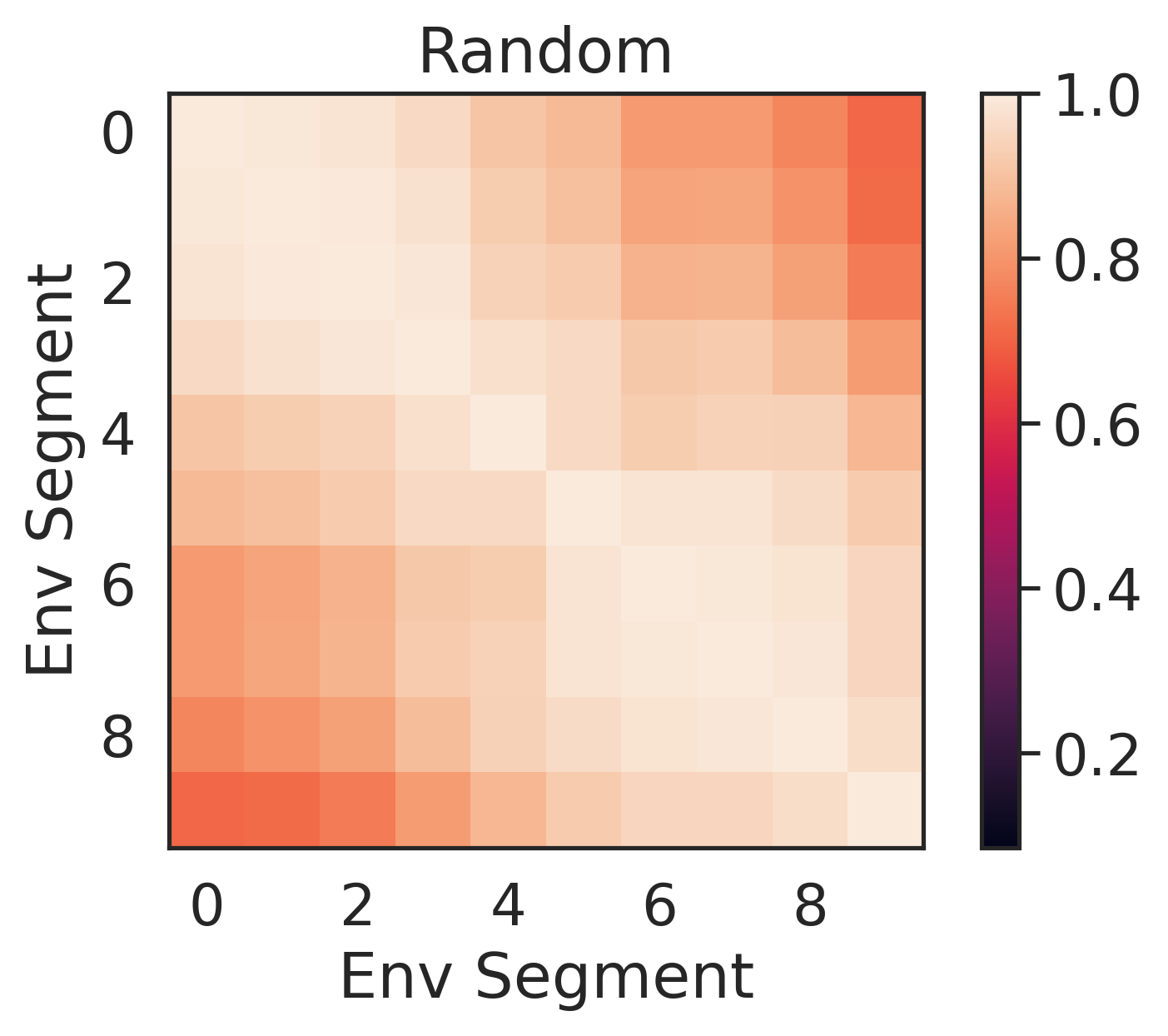}
	\caption{}
	\label{fig:random_interference}
 \end{subfigure}
  \begin{subfigure}[]{0.32\linewidth}
     \centering
	\includegraphics[width=\linewidth]{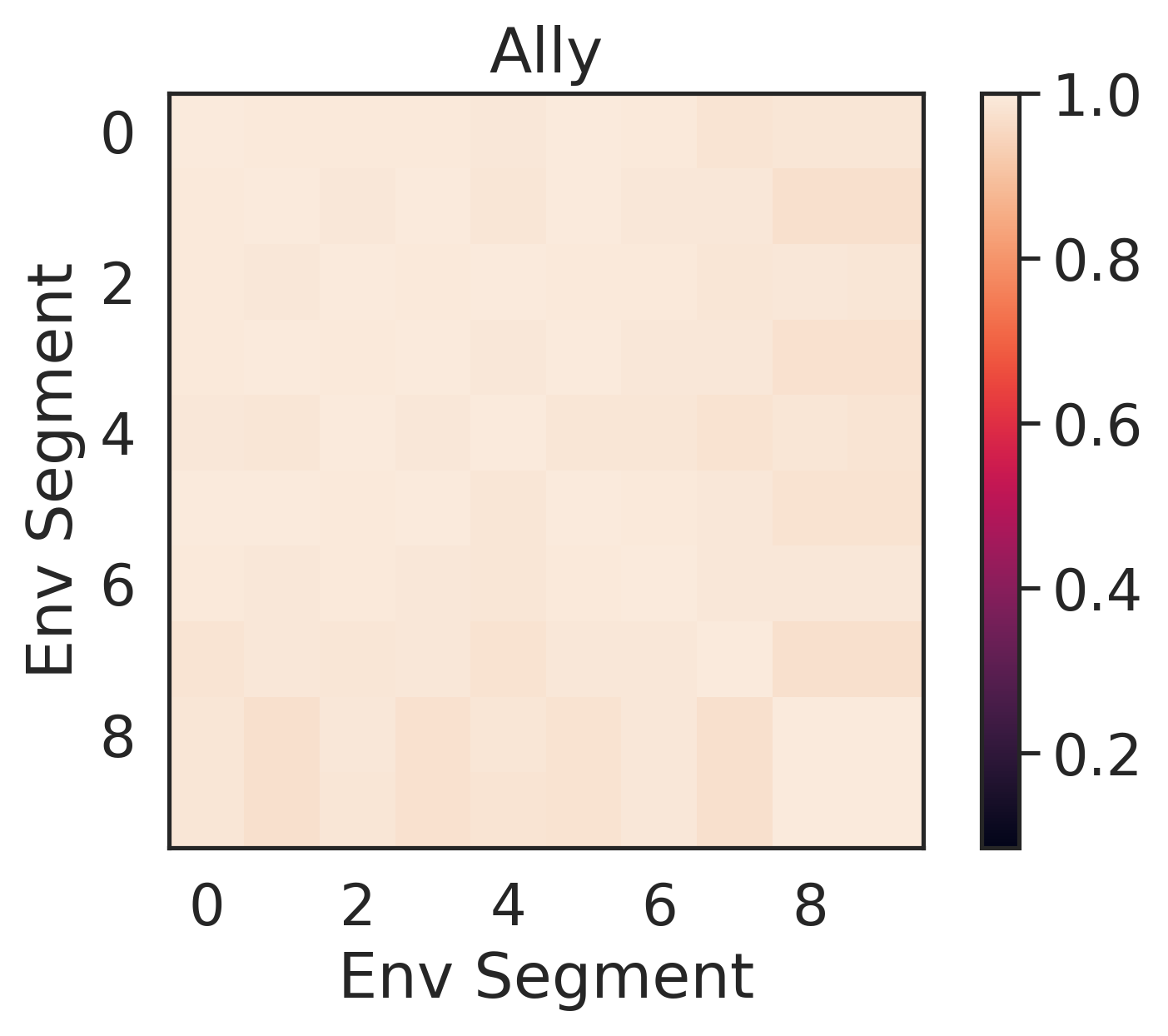}
	\caption{}
	\label{fig:ally_interference}
 \end{subfigure}
    \caption{Visualisations of the cosine distance between gradient updates on different environment segments in cartpole.  We collected each \learner's experience buffer before the agents converge in training and split each into $10$ bins, ordered by the time-step within the environment. We then calculate the gradient update the agents perform on each of these bins. This is the technique used in \citet{fedus2020catastrophic}. For the adversary (a), gradient updates on the early timesteps in an environment interfere with gradient updates on the ending timesteps. For the ally (c), they are positively correlated.}
    \label{fig:Interference_Plots}
\end{figure*}

\section{Experiments and Results}
\label{sec:results}

We evaluate ACT on three simple gym environments: Cartpole, Pendulum, and Reacher \citep{brockman2016gym}. We also evaluate ACT on Minatar Breakout \cite{young19minatar, gymnax2022github} to test ACT's ability to scale to higher-dimensional environments. The \learner\ is trained with Proximal Policy Optimisation \citep[PPO]{schulman2017ppo}. The \shaper\ is trained using ES \citep{salimans2017evolutionstrategies}.

We train thousands of agents per minute on a single V100 GPU by vectorising both the \textit{PPO algorithm itself} and the environments using Jax \citep{jax2018github}. This allows us to JIT-compile the \textit{full training pipeline} and perform end-to-end deep RL training completely on GPUs. We adapt the environment implementations from \citet{brockman2016gym} and \citet{lenton2021ivy} and use the ES implementation from \citet{evosax2022github}. This compute setup allows us to efficiently perform outer-loop ES on the full training trajectories of inner-loop PPO agents. For example, in Cartpole, we can simultaneously train 8192 PPO agents at a time on a single V100 GPU. Over 1024 generations of ES, this results in training 8,388,608 PPO agents from scratch in 2 hours on 4 V100 GPUs. The longest training time, which was the test-time Reacher setting, took 20 hours to train 1024 generations on 4 V100 GPUs. We include videos of the \learner's performance alongside visualisations of the \shaper's outputs at \href{https://sites.google.com/view/adversarial-cheap-talk/home}{this site.}

Training details are provided in Appendix \ref{app:hyperparameter-details}. We include further ablations where we apply our method to perturbation-based settings and to `useless' features described in \citet{lange2022lottery} instead of Cheap Talk MDPs in Appendix \ref{app:additive-attack}. 

\subsection{Train-Time Influence}
\label{sec:train-time-influence}

Figure \ref{fig:all_game_trainings} and Figure \ref{fig:breakout_curves} show the results of training \learner s alongside four different \shapers.

\begin{enumerate}
\item \textbf{Ally}: meta-trained to \textit{maximise} the \learner's mean reward throughout training.

\item \textbf{\shaper}: meta-trained to \textit{minimise} the \learner's mean reward throughout training.

\item \textbf{Random \shaper}: randomly initialise and fix the \shaper's parameters $\phi$ using LeCun Uniform initialisation \citep{lecun2021uniform}.

\item \textbf{Zeroes \shaper}: appends only zeroes as \fname s.
\end{enumerate}


\paragraph{Ally.} The Ally manages to assist the \learner\ to learn and converge faster -- this is likely done by appending useful features of the environment. We present further analysis in Appendix \ref{app:ablations} Figure \ref{fig:pro_oracle_curves}.

\paragraph{\shaper.} The \learner s trained alongside the \shapers\ are vastly outperformed by the baselines, even though the \shaper\ cannot change the dynamics of the underlying MDP, and cannot add non-stationarity or stochasticity. Moreover, since \shapers\ cannot influence tabular \learners\  by Proposition \ref{prop:tabular}, this must be accomplished through learnt interference with the \learner's function approximator.
We hypothesise that the \shaper\ may be inducing catastrophic interference within the environment, which was observed by \citet{fedus2020catastrophic} in Atari 2600 games. They show that features useful in the early phases of an environment episode can interfere with learning features for performing well in the later phases of an episode. 
We perform the analysis done in \citet{fedus2020catastrophic} in Figure \ref{fig:Interference_Plots} to confirm this hypothesis in the Adversarial setting. Meanwhile, we show that the opposite effect occurs in the Allied setting: the gradient updates are positively correlated, suggesting that the gradient updates at different timesteps aid each other.

We also compare our evolutionary meta-optimisation procedure to Robust Adversarial Reinforcement Learning (RARL) \citep{pinto2017rarl} in Figure \ref{fig:rarl_curves}, which updates the adversary's parameters online using reinforcement learning. In both settings, the Adversary can only communicate over the cheap talk channel; however, RARL is given a larger range of influence. Firstly, RARL introduces non-stationarity since it is updated online during the opponent's learning. Secondly, RARL is parameterised by a stochastic policy, meaning that it can inject stochasticity into the environment. Thirdly, RARL is able to train directly against the \learner's policy online, unlike ACT which cannot view the \learner's policy or actions. However, RARL ultimately underperforms ACT in the adversarial setting since it does not consider the long-term evolution of the \learner's policy. Thus, the \learner\ learns to simply ignore the cheap talk channel. 


\begin{figure*}[ht!]
 \centering
 \begin{subfigure}[]{0.32\linewidth}
     \centering
	\includegraphics[width=\linewidth]{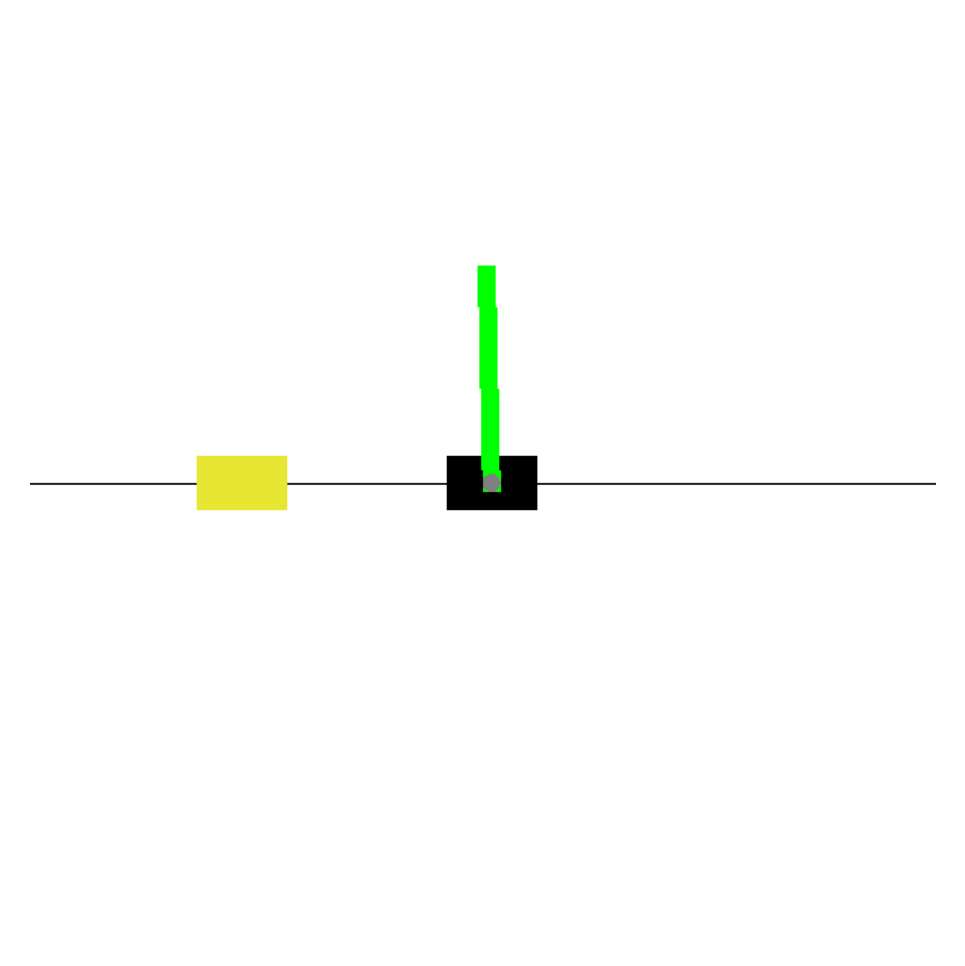}
	\caption{}
	\label{fig:cartpole_goal}
 \end{subfigure}
  \begin{subfigure}[]{0.32\linewidth}
     \centering
	\includegraphics[width=\linewidth]{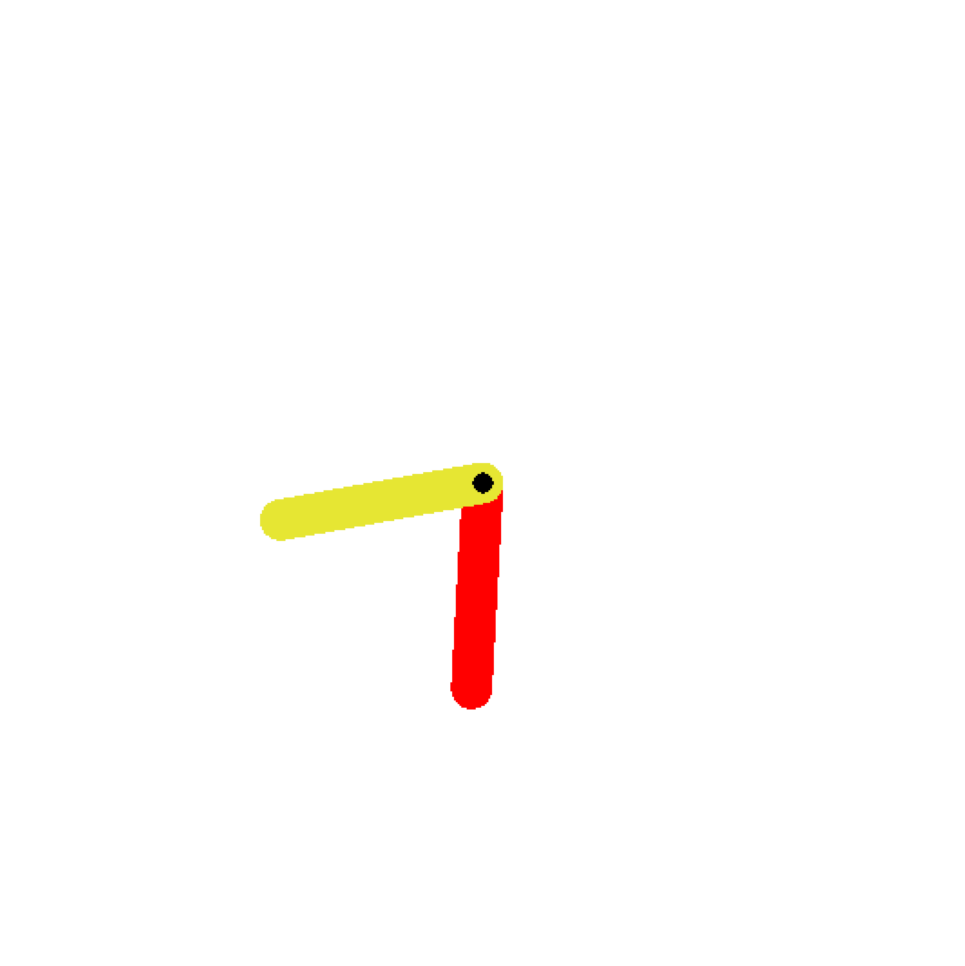}
	\caption{}
	\label{fig:pendulum_goal}
 \end{subfigure}
  \begin{subfigure}[]{0.32\linewidth}
     \centering
	\includegraphics[width=\linewidth]{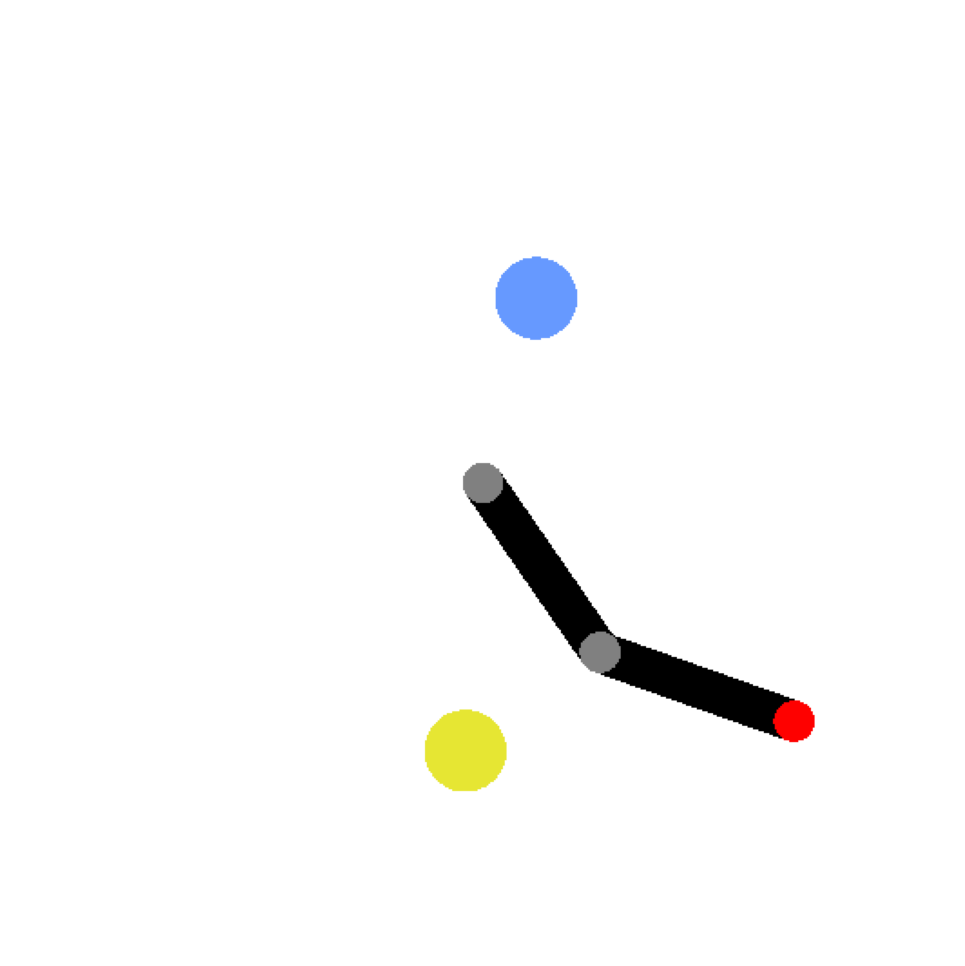}
	\caption{}
	\label{fig:reacher_goal}
 \end{subfigure}
 \caption{Visualisations of our goal-conditioned environments (a) In Cartpole, the \shaper's target is a randomly selected point on the x-axis (the yellow box). (b) In Pendulum, the \shaper's goal is a randomly selected angle (the yellow pole). (c) In Reacher, the  \shaper's goal is a random point, (the yellow circle). The \learner's goal is the blue circle. Videos of this setting can be found \href{https://sites.google.com/view/adversarial-cheap-talk/home}{here}.}
 \label{fig:goal_envs}
\end{figure*}

\begin{figure*}[t]
 \centering
 \begin{subfigure}[]{0.32\linewidth}
     \centering
	\includegraphics[width=\linewidth]{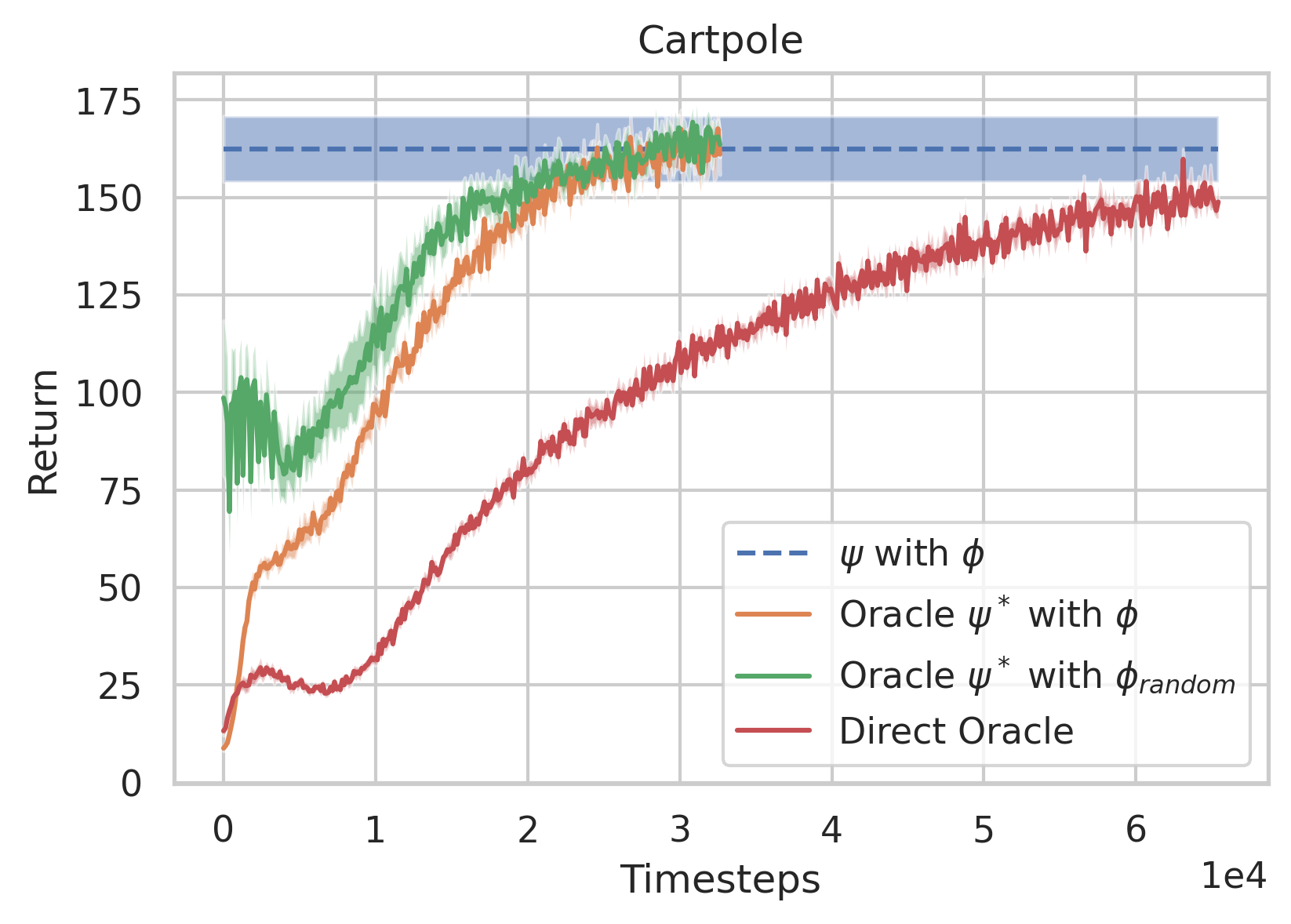}
	\caption{}
	\label{fig:cartpole_puppet}
 \end{subfigure}
  \begin{subfigure}[]{0.32\linewidth}
     \centering
	\includegraphics[width=\linewidth]{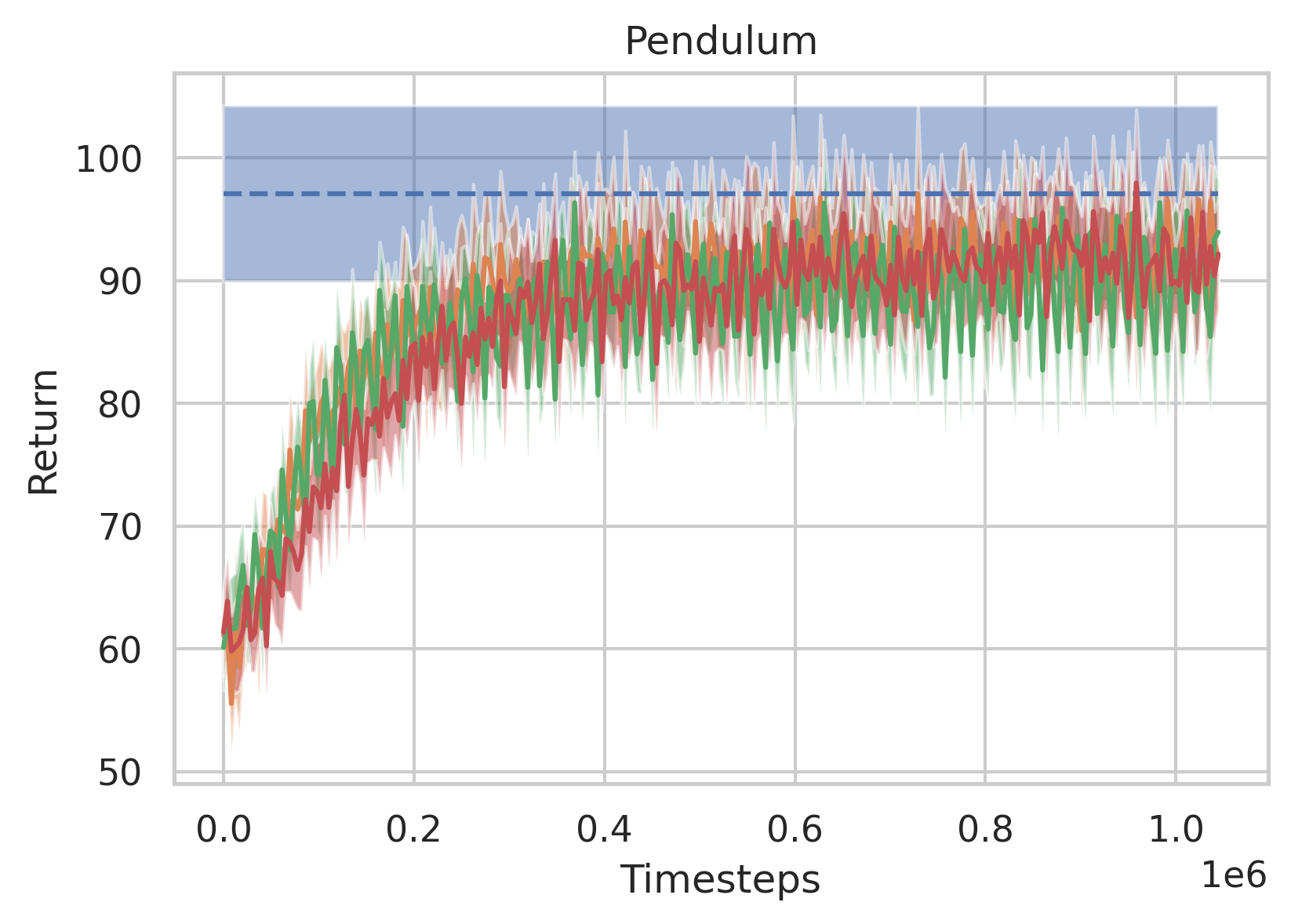}
	\caption{}
	\label{fig:pendulum_puppet}
 \end{subfigure}
  \begin{subfigure}[]{0.32\linewidth}
     \centering
	\includegraphics[width=\linewidth]{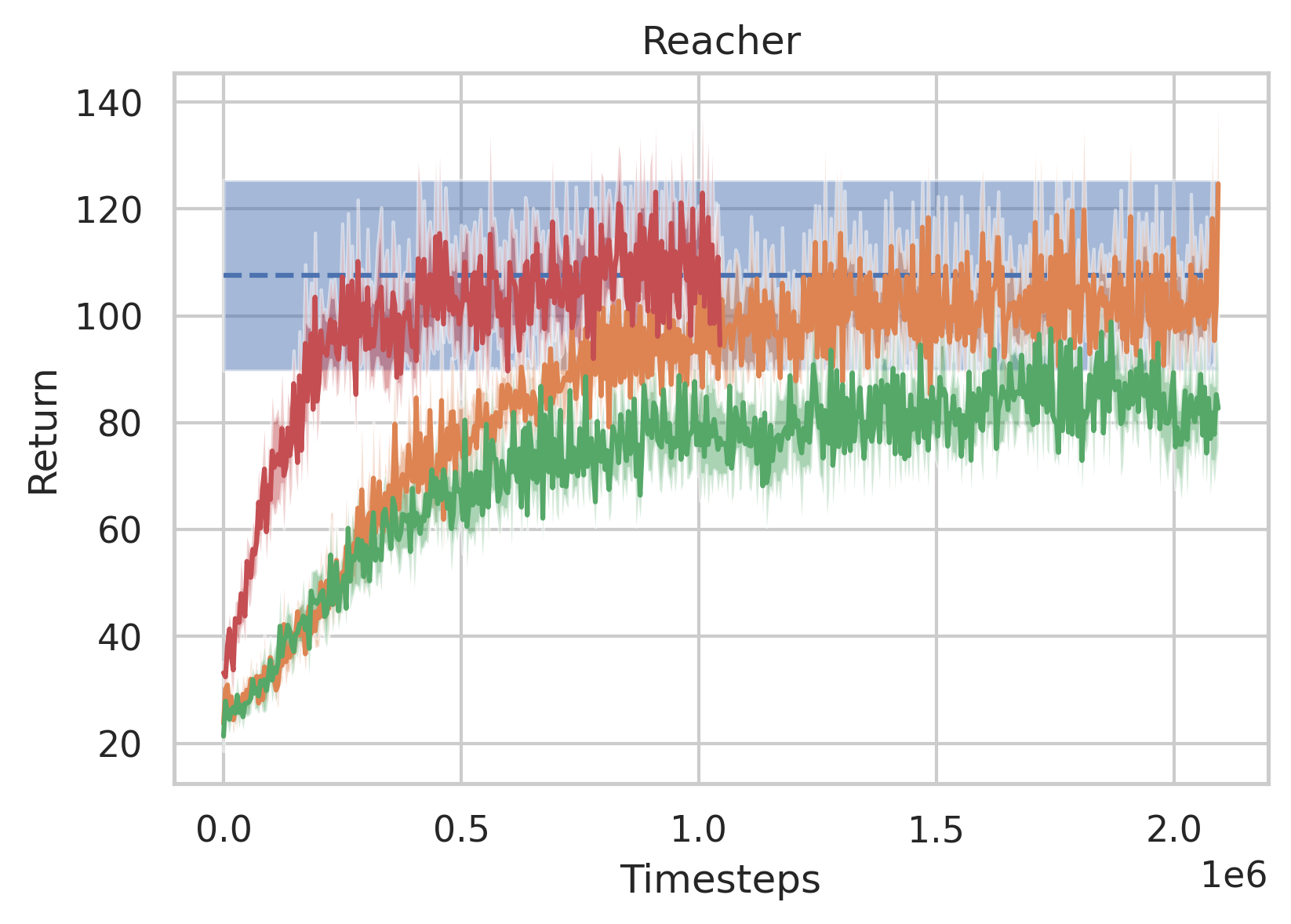}
	\caption{}
	\label{fig:reacher_puppet}
 \end{subfigure}
 \caption{Training curves of the different agents in (a) Goal-Conditioned Cartpole (b) Goal-Conditioned Pendulum (c) Goal-Conditioned Reacher. The ablations show that the train- and test-time \shaper\ learn near-optimal performance in comparison to the oracles.
 Error bars denote the standard error of the mean across $10$ seeds of \learner\ trained against a single trained \shaper.
 }
 \label{fig:zero_shot_vs_oracles}
\end{figure*}

\subsection{Test-Time Manipulation}
In test-time manipulation, the \shaper's objective is to maximise the score of the \textit{goal-conditioned objectives} described in Figure \ref{fig:goal_envs} at test-time. The \shaper\ needs to learn to introduce a backdoor during train-time and use the backdoor during test-time to fully control the \learner\ as explained in Section \ref{subsec:zero-shot}.
To better understand the capability of our model, we investigate four different \shaper-\learner\ settings, which serve as ablations to study the individual and joint performance of the train- and test-time \shapers. 

\begin{enumerate}

\item \textbf{Test-Time \shaper\ $\psi$ with Train-Time \shaper\ $\phi$:} This is the algorithm described in \ref{alg:mop_test}. First, we train a \learner\ $\theta$ alongside a train-time \shaper\ $\phi$. We then evaluate the return of the test-time \shaper\ $\psi$ according to the goal-conditioned return. Both the train- and test-time \shapers\ are trained using ES. The test-time \shaper\ $\psi$ only gets a single shot against a \learner\ and is thus represented by a horizontal line in Figure \ref{fig:zero_shot_vs_oracles}.

\item \textbf{Test-Time Oracle $\psi^*$ with Train-Time \shaper\ $\phi$:} First, we optimise the \learner\ $\theta$ by training it alongside the above train-time \shaper\ $\phi$. Then, instead of ES, we use PPO to train the test-time \shaper\ $\psi^*$ against the \learner\ $\theta$. Unlike the test-time \shaper, the oracle $\psi^*$ is allowed to train against the pretrained and fixed \learner\ $\theta$ to maximise its returns, as described in Algorithm \ref{alg:mop_test_oracleppo} in Appendix \ref{app:pseudo}. 

\item \textbf{Test-Time Oracle $\psi^*$ with Random Train-Time \shaper\ $\phi_{\text{random}}$:} First, we obtain a \learner\ $\theta$ by training it alongside a \textit{random} train-time \shaper, $\phi_{\text{random}}$, with randomly initialised and fixed parameters. Next, we use PPO to train the test-time \shaper\ $\psi^*$ to maximise the goal-conditioned return.

\item \textbf{Direct Oracle:} In this baseline, there is no cheap talk or \learner. We simply train a PPO agent to maximise the \textit{goal-conditioned return} $J$. It can observe the full state and directly output actions in the environment.

\end{enumerate}

\begin{figure*}[t]
 \centering
 \begin{subfigure}[]{0.4\linewidth}
     \centering
	\includegraphics[width=\linewidth]{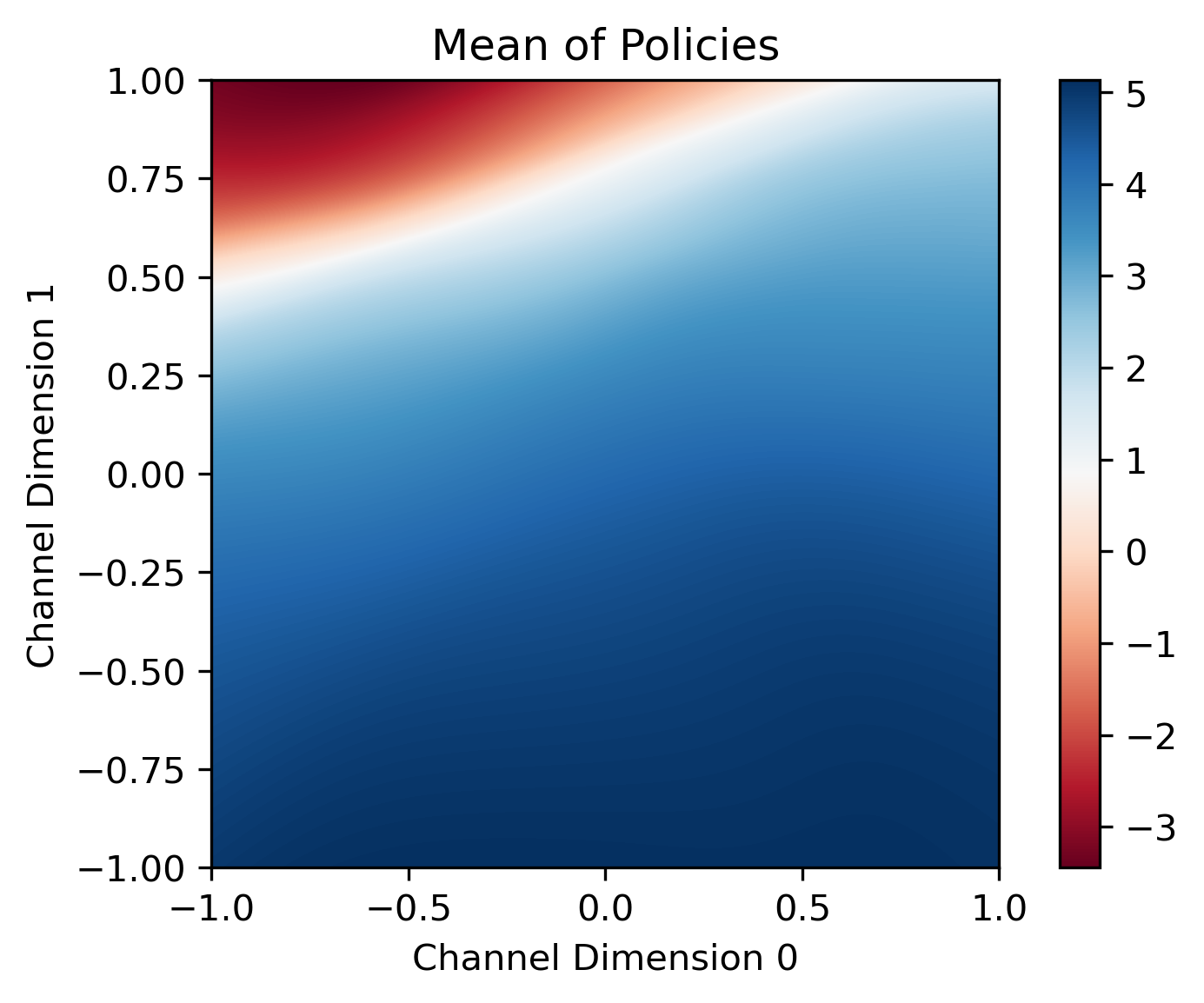}
	\caption{}
	\label{fig:mean_learned}
 \end{subfigure}
  \begin{subfigure}[]{0.4\linewidth}
     \centering
	\includegraphics[width=\linewidth]{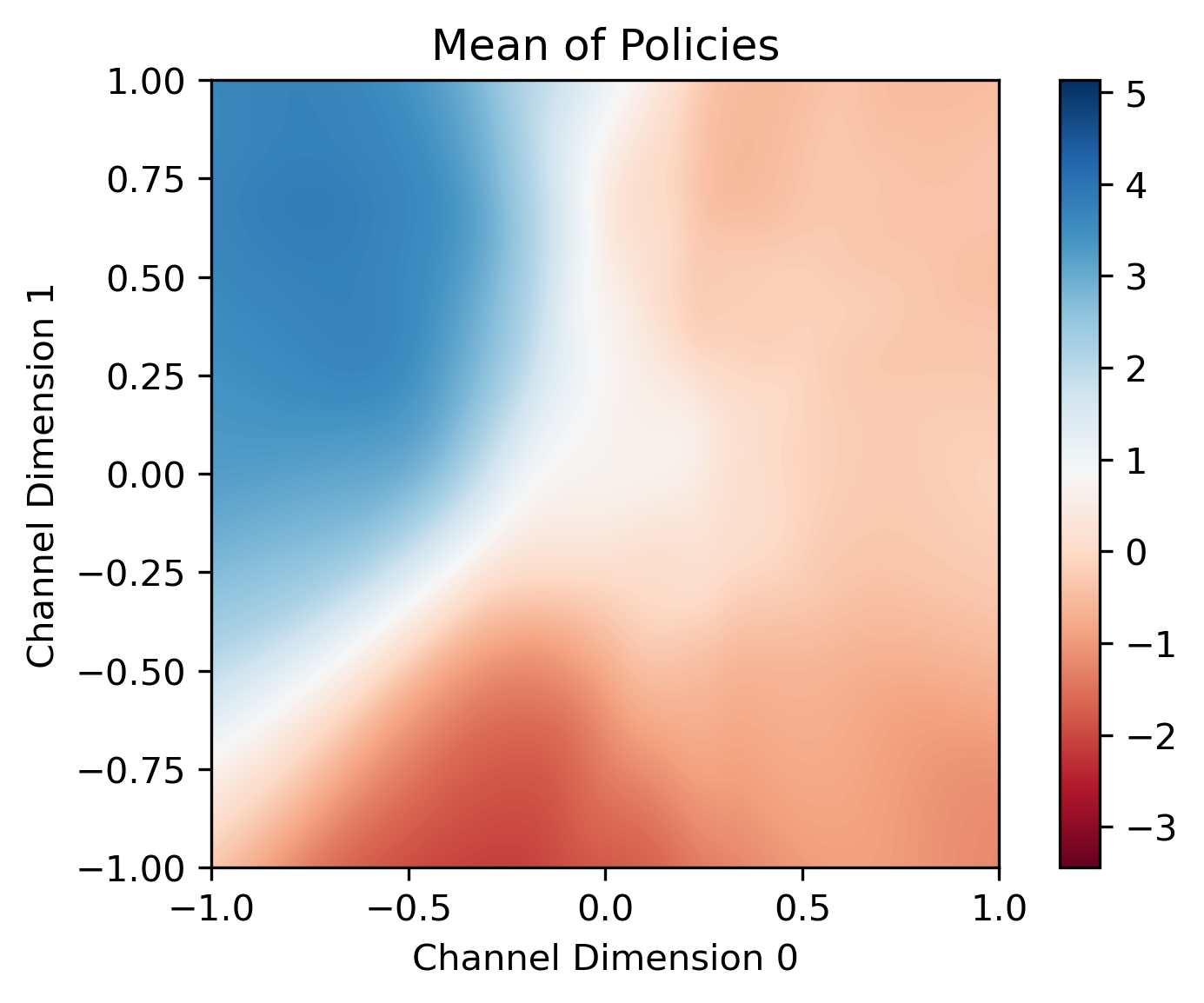}
	\caption{}
	\label{fig:mean_random}
 \end{subfigure}
  \begin{subfigure}[]{0.4\linewidth}
     \centering
	\includegraphics[width=\linewidth]{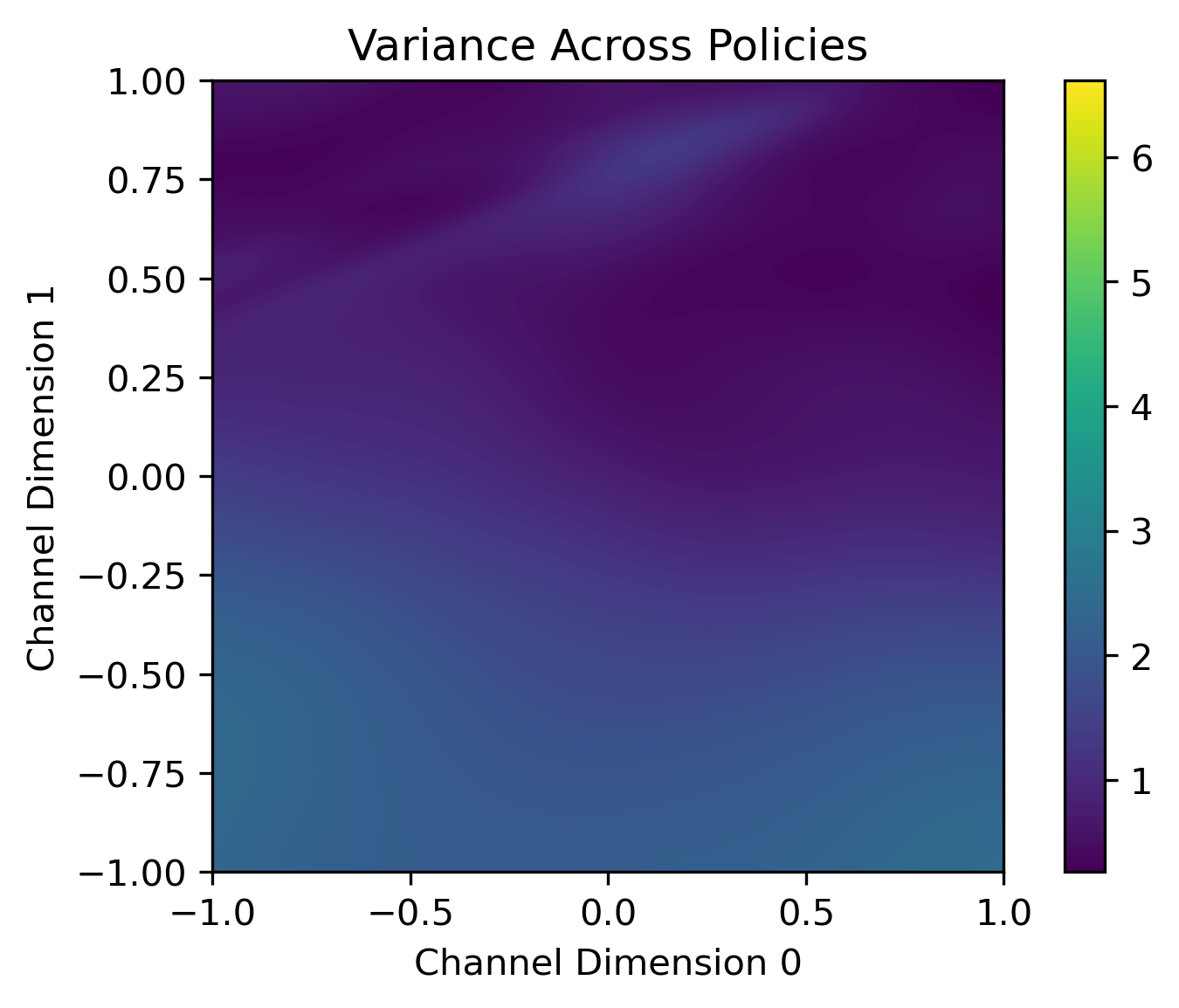}
	\caption{}
	\label{fig:var_learned}
 \end{subfigure}
   \begin{subfigure}[]{0.4\linewidth}
     \centering
	\includegraphics[width=\linewidth]{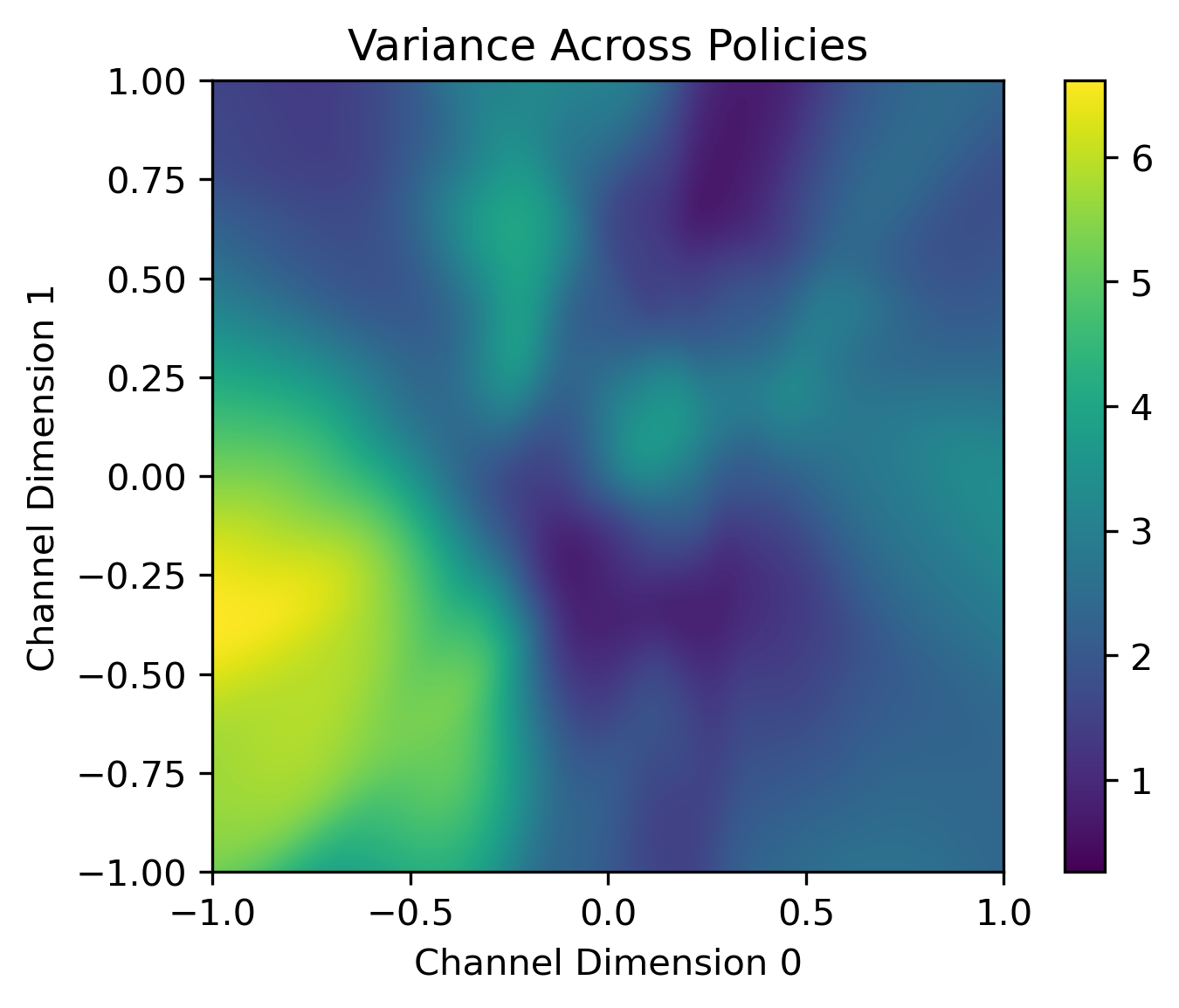}
	\caption{}
	\label{fig:var_random}
 \end{subfigure}
    \caption{We train 10 different \learner s\ alongside the Learned $\phi$ (a \& c), and 10 different \learner s\ alongside a randomly generated $\phi$ (b \& d) in the Pendulum environment. (a) and (b) show the mean of the policy output across the 10 \learner s\ as we vary the value of the \fname\, in a fixed randomly selected state. The policies trained with the learned $\phi$ achieve a much wider range of outputs. (c) and (d) show the variance of the policy output across the 10 \learner s. The policies trained with the learned $\phi$ display very little variance.
    }
    \label{fig:Pendulum_Vis}
\vspace{-15pt}
\end{figure*}
All results are shown in Figure \ref{fig:zero_shot_vs_oracles}. 
We can compare \textbf{(1)} and \textbf{(4)} to measure how effective the train-time \shaper\ $\phi$ and test-time \shaper\ $\psi$ are at achieving the maximal possible return jointly. 
As Figure \ref{fig:zero_shot_vs_oracles} shows, the train- and test-time \shapers\ perform near-optimally.

By comparing \textbf{(2)} and \textbf{(3)}, we can observe how effective $\phi$ is at \textit{shaping} $\theta$. In reacher, we can see that the test-time Oracle $\psi^*$ \textit{cannot} achieve the maximum performance with a random train-time \shaper. 

We compare \textbf{(1)} and \textbf{(2)} to see how effective the test-time \shaper\ $\psi$ is exploiting a given \learner\ $\theta$. $\psi$ achieves near-optimal performance even though it \textit{has never trained against the specific \learner\ $\theta$ or had access to its parameters}. In Figure \ref{fig:Pendulum_Vis} we show that this is possible because the train-time \shaper\ $\phi$ not only maximises the range of outputs that the cheap talk achieves, but it also does so in a \textit{consistent and low-variance} way. 


\section{Conclusion \& Future Work}
In this paper, we propose a novel, minimum-viable, adversarial setting for RL agents, where the \shaper\ can only influence the \learner\ over \fname s, and can only do so with a deterministic function that only depends on the current state. By training an \shaper\ with adversarial cheap talk (ACT), we show that appending to the observations of a learning agent, even with strong constraints, is sufficient to drastically improve or \textit{decrease} a learning agent's train-time performance or introduce a backdoor to control the learning agent at test time completely.
Furthermore, we provide an in-depth analysis of the behaviour of our \shapers. 

As RL models become more widespread, we believe practitioners should consider this new class of minimum viable attacks. We propose using domain knowledge to filter out potentially controllable information as the first defence measure. Identifying these channels without domain knowledge is challenging: While there has been past work in identifying  task-irrelevant features in reinforcement learning \citep{lange2022lottery}, ACT features still contain task-relevant information since they are functions of the state. 
To defend against the test-time \shaper, one can detect when the input goes out-of-distribution \citep{lin2017detecting}. More work is needed to build more robust and practical defences.




\bibliography{main.bib}
\bibliographystyle{abbrvnat}



\newpage

\newpage

\onecolumn

\appendix

\section{Minimality of Cheap Talk MDPs}\label{app:proofs}

\subsection{Proof of Proposition \ref{prop:tabular}}\label{app:proof1}

\begin{proposition}{\ref{prop:tabular}}
In any Cheap Talk MDP, the policy of a \textbf{tabular} \learner\ is independent from its Adversary provided uniform initialisation along $\cM$, namely $\pi_0(\cdot \mid s_i, m_j) = \pi_0(\cdot \mid s_i, m_{j'}) \ \forall \ j, j'$.
\end{proposition}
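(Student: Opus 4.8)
The plan is to couple two training runs of the tabular \learner\ against two arbitrary \shapers\ $f, f'$ and to show, by induction on the number of policy updates $t$, that the two runs produce the same \emph{projected} policy, in the sense that $\pi_t(\cdot \mid s, f(s)) = \pi'_t(\cdot \mid s, f'(s))$ for every state $s$ that is reachable in the respective run (here $\pi_t$ and $\pi'_t$ denote the \learner's policy after $t$ updates in the two runs). Since $f$ and $f'$ are deterministic functions of the underlying state alone, the only augmented states the \learner\ ever encounters are of the form $(s, f(s))$ (resp. $(s, f'(s))$), so the projected policy is the only thing that governs the \learner's behaviour; equality of projected policies for all $t$ therefore implies that the \learner\ induces the same distribution over trajectories $(s_0, a_0, r_0, s_1, \dots)$ regardless of which \shaper\ it faces, which is the claim.

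First I would make precise what a \emph{tabular} \learner\ is: its policy is represented by an independent table entry per augmented state, and a generic tabular update rule (tabular policy gradient, tabular $Q$-learning with a fixed policy-improvement operator, SARSA, etc.) modifies only the entries of augmented states that were actually visited during the collected experience, as a function of the observed actions, rewards and successor states at those visits --- together with, for bootstrapping methods, the current table values at the successor states. The two structural facts this gives are: (i) the update applied to entry $(s, f(s))$ does not depend on the label $f(s)$ of the \fname, only on the data stream of visits to that entry; and (ii) by the Cheap Talk MDP assumptions $\cP(\cdot \mid s, m, a) = \cP(\cdot \mid s, m', a)$ and $\cR(s, m, a) = \cR(s, m', a)$, the distribution of that data stream is independent of the \fname.

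For the induction, the base case is immediate: uniform initialisation $\pi_0(\cdot \mid s_i, m_j) = \pi_0(\cdot \mid s_i, m_{j'})$ gives $\pi_0(\cdot \mid s, f(s)) = \pi_0(\cdot \mid s, f'(s))$ for all $s$. For the inductive step I would couple the two runs by sharing the environment randomness and the action-sampling randomness: by the induction hypothesis the projected policies agree, and by fact (ii) the transitions and rewards do not see the \fname, so under this coupling the two trajectories coincide on their state--action--reward components almost surely. Hence the sets of visited augmented states project to the same set of states, the per-entry data streams coincide, and applying the label-agnostic tabular update from fact (i) yields $\pi_{t+1}(\cdot \mid s, f(s)) = \pi'_{t+1}(\cdot \mid s, f'(s))$ for every visited $s$; entries $(s, m)$ with $m \neq f(s)$ are never touched and stay at their \fname-independent initial values, so they are irrelevant. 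Taking the union over all $t$ closes the argument.

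The main obstacle I anticipate is the first step --- settling on a definition of ``tabular learning algorithm'' that is at once general enough to capture the algorithms one actually cares about and sharp enough to licence facts (i) and (ii), in particular the label-agnosticism of the per-entry update and the correct treatment of bootstrapping (updating $(s, f(s))$ reads the table at $(s', f(s'))$, which is harmless because the induction hypothesis already equates that entry across the two runs). Once that abstraction is in place, the coupling and induction are routine.
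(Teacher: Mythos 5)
Your proposal is correct and follows essentially the same route as the paper's proof: an induction on the update index showing that the two runs' policies agree on the only augmented states ever visited, namely $(s, f(s))$, with the base case given by uniform initialisation along $\cM$ and the inductive step given by independence of $\cP, \cR$ from the \fname. Your version is if anything slightly more explicit than the paper's --- you make the trajectory coupling almost-sure rather than arguing equality in distribution, and you spell out the label-agnosticism of the per-entry update and the treatment of bootstrapped reads at $(s', f(s'))$, which the paper folds into the phrase ``updated according to the same learning rule'' --- but these are refinements of presentation, not a different argument.
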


\begin{proof} In a Cheap Talk MDP $\langle\cS, \cA, \cP, \cR, \gamma, \cM, f, \cJ \rangle$, a tabular \learner\ arbitrarily orders states as $\{s_1, \ldots, s_d \}$ and messages as $\{ m_1, \ldots, m_k\}$, where $d = |\cS|$ and $k = |\cM|$, and stores policies $\pi_t(\cdot \mid s_i, m_j)$ at time $t$ of the learning process for all $i \in [d], j \in [k]$. The argument follows identically for value functions. Assuming uniform initialisation along the $\cM$ axis means that 
\[ \pi_0(\cdot \mid s_i, m_j) = \pi_0(\cdot \mid s_i, m_{j'})\]
for all $j, j' \in [k]$. Now consider any two \shapers\ $f, g$ and their influence on two copies of the same \learner\ $V, W$ with respective policies $\pi, \lchi$. The only states encountered in the environment are of the form $(s, f(s))$ and $(s, g(s))$ respectively, so \learners\ only update the corresponding policies
\[ \pi_t(\cdot \mid s_i, f(s_i)) \qquad \text{and} \qquad \lchi_t(\cdot \mid s_i, g(s_i)) \,. \]
We prove by induction that these quantities are equal for all $t$. The base case holds by uniform initialisation along $\cM$; assume the claim holds for all fixed $0 \leq t \leq T$. The \learners\ update their policies at time $T+1$ according to the same learning rule, as a function of the transitions and returns under current and past policies $\pi_t$ and $\lchi_t$ respectively. Transitions take the form $(s, f(s), a, s', f(s'))$ for $V$ and $(s, g(s), a, s', g(s))$ for $W$, which have identical probabilities and returns because
\begin{align*}
\pi_t(a \mid s_i, f(s_i)) &= \lchi_t(a \mid s_i, g(s_i)) \, ; \\
\cP(s', f(s') \mid s, f(s), a) &= \cP(s', g(s') \mid s, g(s), a) \, ;\\
\cR(s, f(s), a) &= \cR(s, g(s), a)
\end{align*}
by inductive assumption and independence of $\cP, \cR$ from $\cM$. This implies that the Victims' policies $\pi_T(\cdot \mid s_i, f(s_i)) = \lchi_T(\cdot \mid s_i, g(s_i))$ are updated identically to
\[ \pi_{T+1}(\cdot \mid s_i, f(s_i)) = \lchi_{T+1}(\cdot \mid s_i, g(s_i)) \]
as required to complete induction. Note that this would not necessarily hold in non-tabular settings, where updating parameters $\theta$ of the function approximator for some state $(s_i, f(s_i))$ may alter the policy on some other state $(s_j, f(s_j))$. It now follows that trajectories $\tau = (s^k, f(s^k), a^k)_k$ for $V$ and $\omega = (s^k, g(s^k), a^k)_k$ for $W$ have identical probabilities and hence produce identical returns
\[ \mathbb{E}_{\tau \sim \pi_t} \left[ \cR(\tau) \right] = \mathbb{E}_{\omega \sim \lchi_t} \left[ \cR(\omega) \right] \]
at any timestep $t$ of the learning process, concluding independence from \shapers.
\end{proof}

\subsection{Proof of Proposition \ref{prop:optimal}}\label{app:proof2}

\begin{proposition}{\ref{prop:optimal}}
A \learner\ which is \textbf{guaranteed to converge to optimal policies in MDPs} will also converge to optimal policies in Cheap Talk MDPs, with an expected return equal to the optimal return for the corresponding no-channel MDP.
\end{proposition}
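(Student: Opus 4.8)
The plan is to exploit the fact that a Cheap Talk MDP is nothing more than an ordinary MDP on the enlarged state space $\cS \times \cM$, so the convergence half of the statement is immediate from the hypothesis and all the real content lies in equating the optimal return of this enlarged MDP with that of the no-channel MDP. Concretely, I would first package the Cheap Talk MDP $\langle \cS, \cA, \cP, \cR, \gamma, \cM, f, \cJ\rangle$ as the ordinary MDP $\mathcal{D}' = \langle \cS \times \cM, \cA, \cP', \cR', \gamma \rangle$ with $\cP'\big((s',m') \mid (s,m), a\big) = \cP(s' \mid s, a)\,\mathbbm{1}[m' = f(s')]$ and $\cR'\big((s,m), a\big) = \cR(s, a)$, where the latter two definitions are well posed precisely because $\cP$ and $\cR$ are independent of $\cM$. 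By hypothesis the \learner\ converges to a policy $\pi^\ast$ that is optimal in $\mathcal{D}'$; it remains to evaluate $\max_\pi J_{\mathcal{D}'}(\pi)$ and show it equals $\max_\pi J_{\mathcal{D}_0}(\pi)$ for the no-channel MDP $\mathcal{D}_0 = \langle \cS, \cA, \cP, \cR, \gamma \rangle$.

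For that step I would write down the Bellman optimality equation for $\mathcal{D}'$. Substituting the definitions above collapses the sum over $m'$ via the indicator, giving
\[ V'^\ast(s,m) = \max_a \Big[ \cR(s,a) + \gamma \sum_{s'} \cP(s' \mid s, a)\, V'^\ast\big(s', f(s')\big) \Big]. \]
The key observation is that the right-hand side does not depend on $m$ at all, so $V'^\ast(s,\cdot)$ is constant in its second argument; setting $U(s) := V'^\ast(s, f(s))$, the displayed equation becomes exactly the Bellman optimality equation of $\mathcal{D}_0$ in the unknown $U$. Since $\gamma < 1$ the Bellman optimality operator is a $\gamma$-contraction on the complete space of bounded value functions under the sup norm, so its fixed point is unique, whence $U = V_{\mathcal{D}_0}^\ast$, the optimal value function of the no-channel MDP.

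It then only remains to match returns at the initial state. Because $f$ is deterministic, every state ever visited in $\mathcal{D}'$ has the form $(s, f(s))$ --- including the initial state $(s_0, f(s_0))$ with $s_0$ drawn from the original initial-state distribution --- so a policy optimal in $\mathcal{D}'$ (even only along reachable states) attains return $\mathbb{E}_{s_0}\big[V'^\ast(s_0, f(s_0))\big] = \mathbb{E}_{s_0}\big[U(s_0)\big] = \mathbb{E}_{s_0}\big[V_{\mathcal{D}_0}^\ast(s_0)\big]$, which is the optimal return of the no-channel MDP. Combining with the previous paragraph, the \learner\ both converges (it converges in the MDP $\mathcal{D}'$) and does so to the claimed value. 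I do not anticipate a genuine obstacle: the argument is essentially bookkeeping around the augmented state space plus one appeal to contraction. The only points I would state cleanly are that the convergence hypothesis is to be read as optimality in $\mathcal{D}'$, that optimality along the reachable states $(s, f(s))$ is all that is needed since only these are visited, and that $\gamma < 1$ is the single structural assumption used (for uniqueness of the Bellman fixed point).
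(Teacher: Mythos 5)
Your proposal is correct and follows essentially the same route as the paper's proof: recast the Cheap Talk MDP as an ordinary MDP on $\cS \times \cM$ with $\cP, \cR$ independent of $\cM$, invoke the convergence hypothesis, and use the Bellman optimality equation to identify the optimal value on the reachable states $(s, f(s))$ with that of the no-channel MDP. The only (welcome) difference is that you make explicit the $\gamma$-contraction/uniqueness step and work with $V$ at the initial state rather than with $Q$ and a trajectory-matching argument, which the paper leaves implicit behind ``follows directly from the Bellman equation.''
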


\begin{proof}
By assumption, the \learner\ is guaranteed to converge to an optimal policy $\bar{\pi}$ in any given Cheap Talk MDP $\langle \cS, \cA, \cP, \cR, \cM, f, \cJ, \gamma\rangle$, since a Cheap Talk MDP is itself an MDP with an augmented state space $\cS \times \cM$ and augmented transition/reward functions that are defined to be independent from $\cM$. Now $\bar{\pi}$ naturally induces a policy $\pi$ on the no-channel MDP, given by $\pi(\cdot \mid s) \coloneqq \bar{\pi}(\cdot \mid s, f(s))$, and in particular $Q(s, a) = \bar{Q}(s, f(s), a)$ by independence of transitions and rewards from $\cM$. Optimality of $\pi$ follows directly from the Bellman equation
\begin{align*}
Q(s, a) = \bar{Q}(s, f(s), a) &= \mathbb{E}_{s' \sim \cP(\cdot \mid s, a), r \sim \cR(\cdot \mid s, a)}\left[r + \gamma \max_{a' \in \cA} \bar{Q}(s', f(s'), a') \right] \\
&= \mathbb{E}_{s' \sim \cP(\cdot \mid s, a), r \sim \cR(\cdot \mid s, a)}\left[r + \gamma \max_{a' \in \cA} Q(s', a') \right] \,. 
\end{align*}

Now trajectories $\bar{\tau} = (s^k, f(s^k), a^k)_k$ and $\tau = (s^k, a^k)_k$ have identical probability and return under $\pi$ and $\bar{\pi}$ respectively, so the \learner\ has expected return
\[ \mathbb{E}_{\bar{\tau} \sim \bar{\pi}} \left[ \cR(\bar{\tau}) \right] = \mathbb{E}_{\tau \sim \pi} \left[ \cR(\tau) \right] \]
which is the optimal expected return of the original no-channel MDP.
\end{proof}

\subsection{Further Informal Discussion}\label{app:informal}

Consider a \psname\ $\langle\cS, \cA, \cP, \cR, \gamma, \cM, f, \cJ \rangle$. For a fixed training / testing run of the \learner\ on the MDP, the \shaper\ outputs a \fname\ $f(s)$ at each step according to a fixed deterministic function $f : \cS \to \cM$. Below we elaborate informally on the claims that Adversaries cannot (1) occlude the ground truth, (2) influence the environment dynamics / reward functions, (3) see the \learner's actions or parameters, (4) inject stochasticity, or (5) introduce non-stationarity.

\begin{enumerate}[label={(\arabic*)}, leftmargin=20pt]
\item The \fname\ is \textit{appended} to the state $s$ and the \learner\ acts with full visibility of the ground truth (state) $s$ according to its policy: $a \sim \pi(\cdot \mid s, f(s))$.

\item The transition and reward functions $\cP, \cR$ are defined to be independent from $\cM$. Formally we have $\cP(\cdot \mid s, m, a) = \cP(\cdot \mid s, m', a)$ for all $m, m' \in \cM$ (similarly for $\cR$), so the \shaper's choice of \fname\ $m = f(s)$ cannot influence $\cP$ or $\cR$.

\item $f : \cS \to \cM$ is defined as a function of $\cS$ only, so the \shaper\ cannot condition its policy based on the \learner's actions or parameters (i.e. it cannot see or influence them).

\item $f$ is a deterministic function, so $\pi(\cdot \mid s, f(s))$ is a distribution only on actions $\cA$. The transition and reward functions are independent from $f$, so they are distributions only on state-action pairs $\cS \times \cA$. It follows that the \shaper\ injects no further stochasticity into the MDP.

\item $f$ is static for a fixed training / testing run, so $s_t = s_{t'}$ implies $f(s_t) = f(s_{t'})$ for all timesteps $t, t'$ in the run. It follows that any given \learner\ policy $\pi$ is stationary, namely $\pi(\cdot \mid s_t, f(s_t)) = \pi(\cdot \mid s_{t'}, f(s_{t'}))$ for all $s_t = s_{t'}$. Since $\cP$ and $\cR$ are stationary (as defined by a standard MDP) and independent from $\cM$, their stationarity is also preserved. \qedhere
\end{enumerate}

Finally, we discuss the possibility of further weakening components of a \psname, and conclude that all such variants (A-E) bring no advantage or reduce to regular MDPs.

\begin{enumerate}[label={(\Alph*)}, leftmargin=20pt]
\item Removing the channel $\cM$ or the policy $f : \cS \to \cM$ would result in the \learner\ being completely independent from the \shaper, so no adversarial influence could be exerted whatsoever.
\item Restricting the capacity of $\cM$ to a certain number of bits would further restrict an \shaper's range of influence, so one could say that the \textit{truly} minimum-viable setting is to impose a set of size $|\cM| = 1$. However, cheap talk is still cheap talk when varying capacity, and there is no reason to arbitrarily restrict the size to $1$ if we are to apply our setting to complex environments likely requiring more than a single bit of communication to witness interesting results.
\item Not allowing \shapers\ to see states, namely removing $\cS$ as inputs to $f$, yields a function $f : \{ 0 \} \to \cM$ which always outputs the same \fname $f(0) = m \in \cM$. This is equivalent to the previous restriction of imposing a set $\cM$ of size 1, since in this case any function $f : \cS \to \cM$ would have to output the unique element $f(s) = m$ for all input states $s$.
\item The \shaper\ must have some objective function $\cJ$ in order for an adversarial setting to make sense -- removing it would remove the \shaper's reason to exist, since it would have no incentive to learn parameters that influence the \learner\ according to some goal.
\item Restricting the function class of objectives $\cJ$ is a valid minimisation of the setting, but simply restricts our interest in the setting itself. The setting should at the very least allow for adversarial objectives of the form $\cJ = \pm J$, as we consider in the train-time setting. In test-time, our aim is to show how \shapers\ can exert arbitrary control over \learners\ despite cheap talk restrictions, and we therefore consider more general objective functions.
\end{enumerate}

\pagebreak

\section{Pseudocode}
\label{app:pseudo}

\begin{algorithm}[htp!]
\caption{Test-time ACT}
\label{alg:mop_test}
\begin{algorithmic}[1]
\STATE Initialize train-time ACT parameters $\phi$
\STATE Initialize test-time ACT parameters $\psi$
\FOR{$m=0$ {\bfseries to} $M$} 
    \STATE Sample $\phi_n \sim \phi + \sigma\epsilon_{n}$ where $\epsilon_{1}, ..., \epsilon_{N} \sim \mathcal{N}(0, I)$
    \STATE Sample $\psi_n \sim \psi + \sigma\epsilon_{n}$ where $\epsilon_{1}, ..., \epsilon_{N} \sim \mathcal{N}(0, I)$
    \FOR{$n=0$ {\bfseries to} $N$}
    \STATE Initialize policy params $\theta$
    \STATE rewards = []
    \FOR{$e=0$ {\bfseries to} $E$} 
        \STATE s = env.reset()
        \WHILE{not done} 
            \STATE $m = f_{\phi_{n}}(s)$
            \STATE $\bar{s}$ = [$s$, $m$]
            \STATE $a \sim \pi_\theta(\cdot\mid \bar{s})$
            \STATE $r, s$ = env.step($a$)
        \ENDWHILE
        \STATE Update $\theta$ using PPO to maximise its return $J$
    \ENDFOR
    \FOR{$i=0$ {\bfseries to} $I$} 
        \STATE $g = \text{env.getgoal()}$
        \STATE $s = \text{env.reset()}$
        \WHILE{not done} 
            \STATE $m = f_{\psi_{n}}(s, g)$
            \STATE $\bar{s}$ = [$s$, $m$]
            \STATE $a \sim \pi_\theta(\cdot\mid \bar{s})$
            \STATE $r, s$, done = env.step($a$)
            \STATE $r^{G}_t = R^{G}(s, a, g)$
            \STATE scores.append($r^{G}_t$)
        \ENDWHILE
    \ENDFOR
    \STATE $\cJ$ = $\text{sum(scores)} / I$
    \ENDFOR
    \STATE Update $\phi$ and $\psi$ using ES to maximise goal-conditioned objective $\cJ$
\ENDFOR
\end{algorithmic}
\end{algorithm}

\pagebreak

\begin{algorithm}[htp!]
\caption{Test-time Oracle PPO ACT}
\label{alg:mop_test_oracleppo}
\begin{algorithmic}[1]
\STATE Initialize train-time ACT parameters $\phi$
\STATE Obtain trained $\phi$, $\theta$ from Algorithm \ref{alg:mop_test}
\STATE Initialize test-time ACT parameters $\psi^*$
\FOR{$i=0$ {\bfseries to} $I$} 
    \STATE s = env.reset()
    \WHILE{not done} 
        \STATE $m \sim \pi_{\psi^{*}}(\cdot\mid s)$
        \STATE $\bar{s}$ = [$s$, $m$]
        \STATE $a \sim \pi_\theta(\cdot\mid \bar{s})$
        \STATE $r, s$, done = env.step($a$)
        \STATE $r^{S}_t = R^{S}(s, a)$
        \STATE rewards.append($r^{S}_t$)
    \ENDWHILE
    \STATE Update $\psi^*$ using PPO to maximise $\cJ$
\ENDFOR
\end{algorithmic}
\end{algorithm}

\pagebreak

\begin{algorithm}[htp!]
\caption{Test-time Random Shaper}
\label{alg:mop_test_random}
\begin{algorithmic}[1]
\STATE Initialize train-time ACT parameters $\phi_{\text{random}}$
\STATE Initialize policy params $\theta$
\STATE rewards = []
\FOR{$e=0$ {\bfseries to} $E$} 
    \STATE s = env.reset()
    \WHILE{not done} 
        \STATE $m = f_{\phi_{\text{random}}}(s)$
        \STATE $\bar{s}$ = [$s$, $m$]
        \STATE $a \sim \pi_\theta(\cdot\mid \bar{s})$
        \STATE $r, s$ = env.step($a$)
    \ENDWHILE
    \STATE Update $\theta$ using PPO to maximise $J$
\ENDFOR
\STATE Initialize test-time ACT parameters $\psi^*$
\FOR{$i=0$ {\bfseries to} $I$} 
    \STATE s = env.reset()
    \WHILE{not done} 
        \STATE $m \sim \pi_{\psi^*}(\cdot\mid s)$
        \STATE $\bar{s}$ = [$s$, $m$]
        \STATE $a \sim \pi_\theta(\cdot\mid \bar{s})$
        \STATE $r, s$ = env.step($a$)
        \STATE $r^{S}_t = R^{S}(s, a)$
        \STATE rewards.append($r^{S}_t$)
    \ENDWHILE
    \STATE Update $\psi^*$ using PPO to maximise $\cJ$
\ENDFOR

\end{algorithmic}
\end{algorithm}

\pagebreak

\section{Ablations}
\label{app:ablations}

\begin{figure*}[hb]
 \begin{subfigure}[]{0.49\linewidth}
     \centering
	\includegraphics[width=\linewidth]{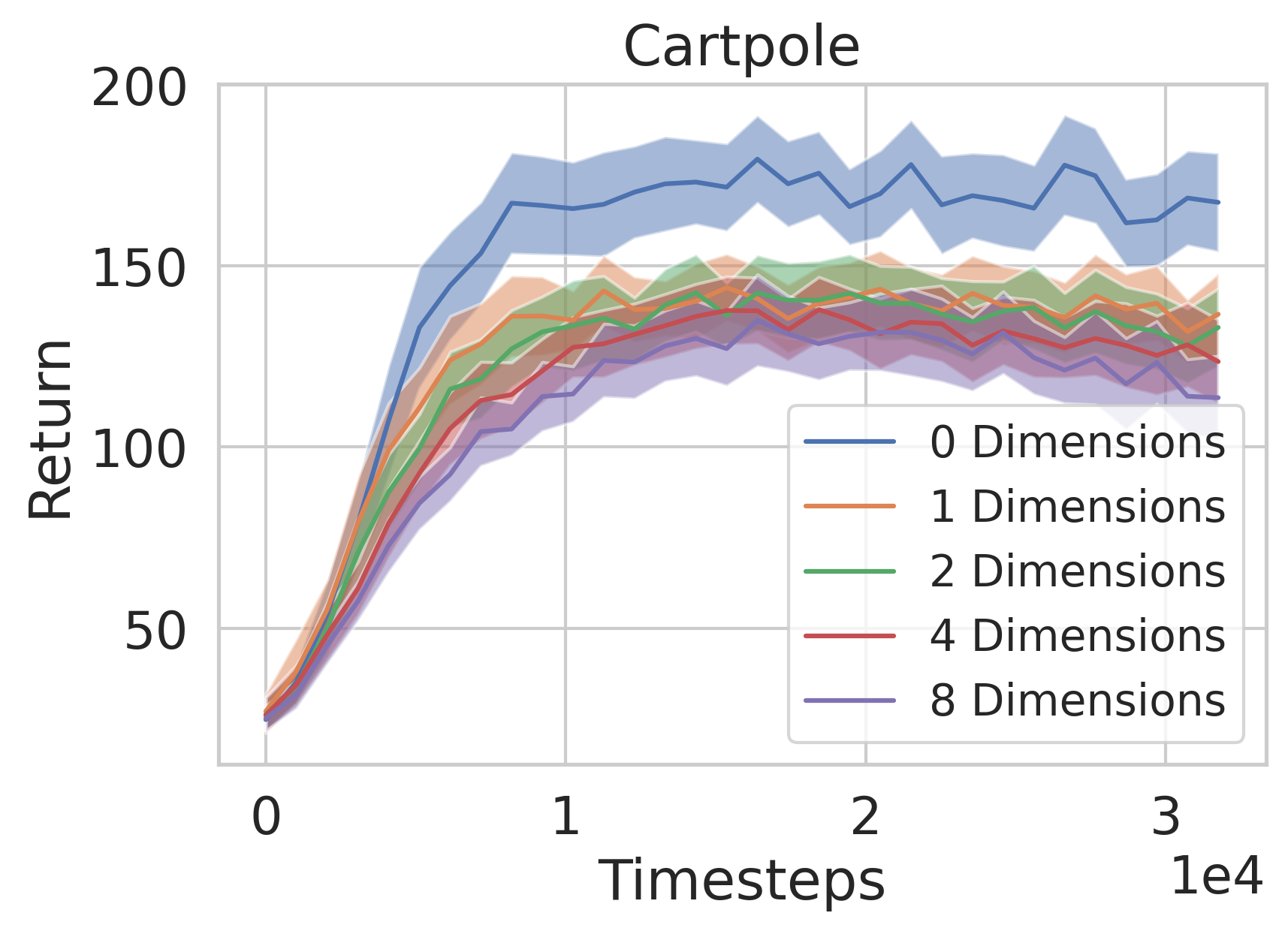}
	\caption{}
	\label{fig:ablation_dimensions_curves}
 \end{subfigure}
  \begin{subfigure}[]{0.49\linewidth}
     \centering
	\includegraphics[width=\linewidth]{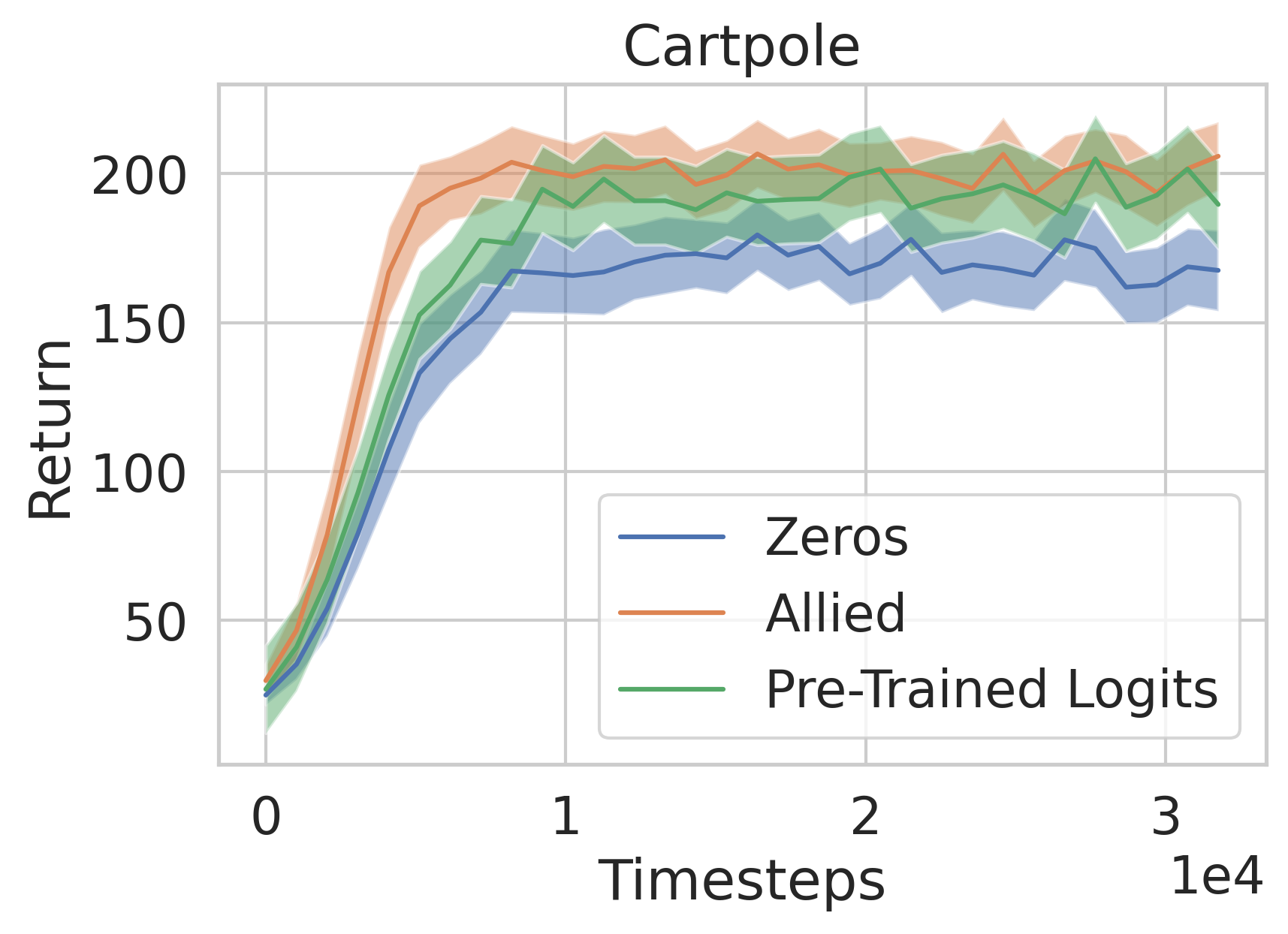}
	\caption{}
	\label{fig:pro_oracle_curves}
 \end{subfigure}
    \caption{(a) Ablations on the different number of cheap talk dimensions for the \shaper\ in Cartpole. We find that for a low-dimensional environment like Cartpole, the \shaper\ does not achieve much marginal improvement from increasing the number of channels, suggesting that there may be some limit to the amount that it can harm performance. (b) Comparing the ally with an \shaper\ that outputs pre-trained logits in Cartpole. We find that the allied ACT still performs better, implying that it is outputting features that are more useful than logits from a pre-trained policy. Error bars denote the standard error across $10$ seeds of a \learner\ trained against a single meta-trained \shaper.}
    \label{fig:ablation_curves}
\end{figure*}

\pagebreak

\pagebreak

\section{Pendulum Ablation}
\label{app:pendulum-ablation}

\begin{figure*}[h]
 \centering
\includegraphics[width=\linewidth]{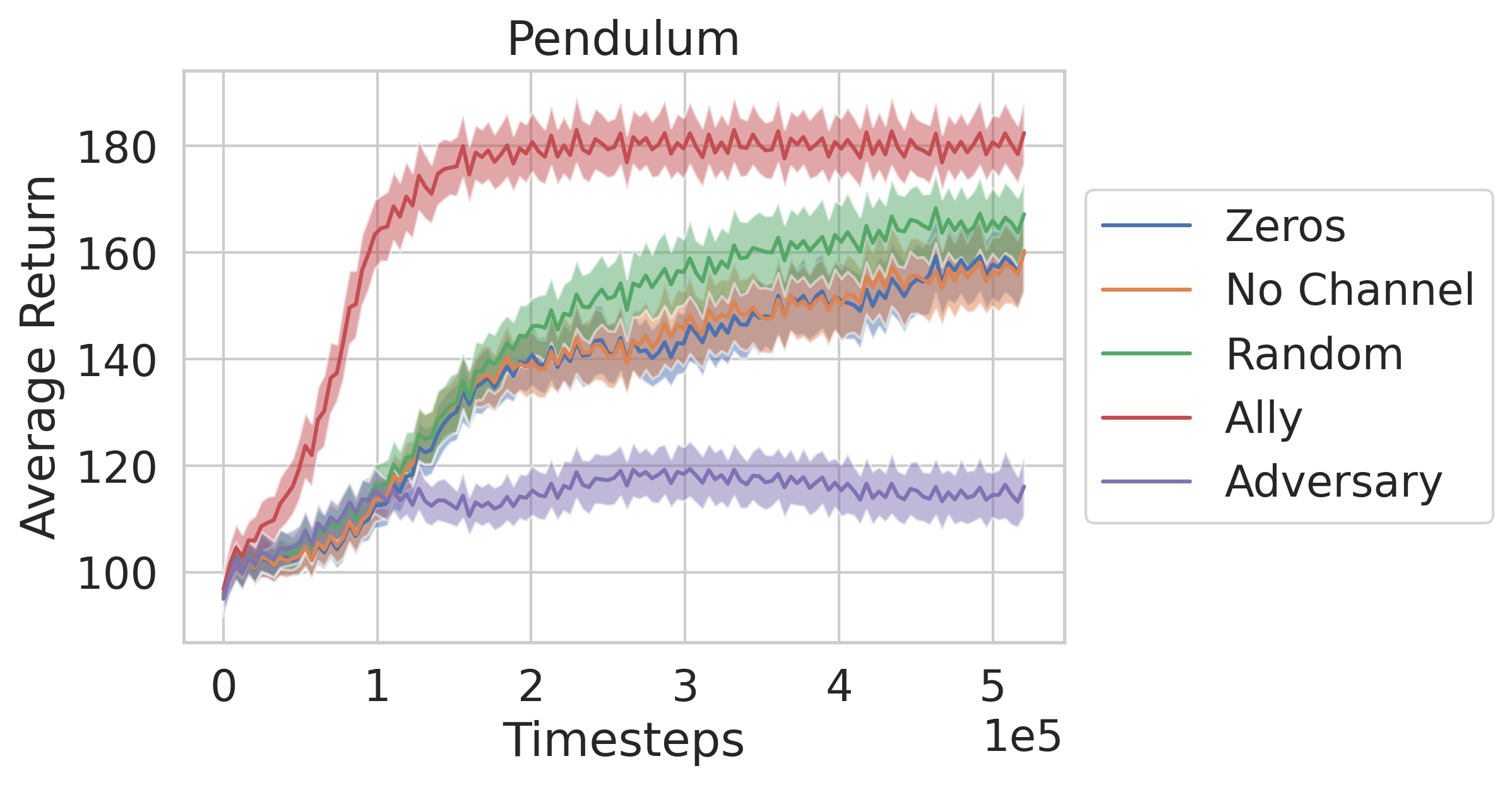}
\caption{Interestingly, it seems like random network features improved performance in Pendulum. To make sure this was not due to network initialisation effects, we ran an ablation where we removed the cheap talk channel. It achieves about the same performance as a channel with zeros, which implies that the performance difference is not due to network initialisation.}
 \label{fig:ablation_pendulum}
\end{figure*}

\pagebreak

\section{Hyperparameter Details}
\label{app:hyperparameter-details}
We report the hyperparameter values used for each environment in our experiments. Our PPO implementation uses observation normalisation (a common design choice for PPO), which means that the attack is invariant to the \textit{range} of values outputted over the cheap talk channel. 

\begin{table}[h]
\centering
\caption{Important parameters for the Cartpole environment}
\label{tab:cartpole_hyperparams}
\begin{tabular}{l|l}
Parameter & Value \\ \hline
State Size & 4 \\
\fname\ Size & 2 \\
\fname\ Range & -2$\pi$, 2$\pi$ \\
Number of Environments & 4 \\
Maximum Grad Norm & 0.5 \\
Number of Updates & 32 \\
Update Period & 256 \\
Outer Discount Factor $\gamma$ & 0.99 \\
Number of Epochs per Update & 16 \\
PPO Clipping $\epsilon$ & 0.2 \\
General Advantage Estimation $\lambda$ & 0.95 \\
Critic Coefficient & 0.5 \\
Entropy Coefficient & 0.01 \\
Learning Rate & 0.005 \\
Population Size & 1024 \\
Number of Generations & 2049 \\
Outer Agent (OA) Hidden Layers & 2 \\
OA Size of Hidden Layers & 64 \\
OA Hidden Activation Function & ReLU \\
OA Output Activation Function & Tanh \\
Inner Agent (IA) Actor Hidden Layers & 2 \\
IA Size of Actor Hidden Layers & 32 \\
IA Number of Critic Hidden Layers & 2 \\
IA Size of Critic Hidden Layers & 32 \\
IA Activation Function & Tanh \\
Number of Rollouts & 4 \\
\end{tabular}

\end{table}

\pagebreak

\begin{table}[h]
\centering
\caption{Important parameters for the Pendulum environment}
\label{tab:pendulum_hyperparams}
\begin{tabular}{l|l}
Parameter & Value \\ \hline
State Size & 3 \\
\fname\ Size & 2 \\
\fname\ Range & -2$\pi$, 2$\pi$ \\
Number of Environments & 16 \\
Maximum Grad Norm & 0.5 \\
Number of Updates & 128 \\
Update Period & 256 \\
Outer Discount Factor $\gamma$ & 0.95 \\
Number of Epochs per Update & 16 \\
PPO Clipping $\epsilon$ & 0.2 \\
General Advantage Estimation $\lambda$ & 0.95 \\
Critic Coefficient & 0.5 \\
Entropy Coefficient & 0.005 \\
Learning Rate & 0.02 \\
Population Size & 768 \\
Number of Generations & 2049 \\
Outer Agent (OA) Hidden Layers & 2 \\
OA Size of Hidden Layers & 64 \\
OA Hidden Activation Function & ReLU \\
OA Output Activation Function & Tanh \\
Inner Agent (IA) Actor Hidden Layers & 1 \\
IA Size of Actor Hidden Layers & 32 \\
IA Number of Critic Hidden Layers & 1 \\
IA Size of Critic Hidden Layers & 32 \\
IA Activation Function & Tanh \\
Number of Rollouts & 4 \\
\end{tabular}

\end{table}

\pagebreak

\begin{table}[h]
\centering
\caption{Important parameters for the Reacher environment}
\label{tab:reacher_hyperparams}
\begin{tabular}{l|l}
Parameter & Value \\ \hline
State Size & 10 \\
\fname\ Size & 4 \\
\fname\ Range & -2$\pi$, 2$\pi$ \\
Number of Environments & 32 \\
Maximum Grad Norm & 0.5 \\
Number of Updates & 256 \\
Update Period & 128 \\
Outer Discount Factor $\gamma$ & 0.99 \\
Number of Epochs per Update & 10 \\
PPO Clipping $\epsilon$ & 0.2 \\
General Advantage Estimation $\lambda$ & 0.95 \\
Critic Coefficient & 0.5 \\
Entropy Coefficient & 0.0005 \\
Learning Rate & 0.004 \\
Population Size & 128 \\
Number of Generations & 2049 \\
Outer Agent (OA) Hidden Layers & 2 \\
OA Size of Hidden Layers & 64 \\
OA Hidden Activation Function & ReLU \\
OA Output Activation Function & Tanh \\
Inner Agent (IA) Actor Hidden Layers & 2 \\
IA Size of Actor Hidden Layers & 128 \\
IA Number of Critic Hidden Layers & 2 \\
IA Size of Critic Hidden Layers & 128 \\
IA Activation Function & ReLU \\
Number of Rollouts & 4 \\
\end{tabular}

\end{table}

\pagebreak

\begin{table}[h]
\centering
\caption{Important parameters for the Minatar environments}
\label{tab:minatar_hyperparams}
\begin{tabular}{l|l}
Parameter & Value \\ \hline
State Size & 400 \\
\fname\ Size & 32 \\
\fname\ Range & -2$\pi$, 2$\pi$ \\
Number of Environments & 64 \\
Maximum Grad Norm & 0.5 \\
Number of Updates & 1024 \\
Update Period & 256 \\
Outer Discount Factor $\gamma$ & 0.99 \\
Number of Epochs per Update & 32 \\
PPO Clipping $\epsilon$ & 0.2 \\
General Advantage Estimation $\lambda$ & 0.95 \\
Critic Coefficient & 0.5 \\
Entropy Coefficient & 0.01 \\
Learning Rate & 3e-4 \\
Population Size & 128 \\
Number of Generations & 256 \\
Outer Agent (OA) Hidden Layers & 2 \\
OA Size of Hidden Layers & 64 \\
OA Hidden Activation Function & ReLU \\
OA Output Activation Function & Tanh \\
Inner Agent (IA) Actor Hidden Layers & 2 \\
IA Size of Actor Hidden Layers & 256 \\
IA Number of Critic Hidden Layers & 2 \\
IA Size of Critic Hidden Layers & 256 \\
IA Activation Function & ReLU \\
Number of Rollouts & 1 \\
\end{tabular}

\end{table}



\pagebreak

\section{RARL Hyperparameter Details}
\label{app:hyperparameter-rarl}

\begin{table}[h]
\centering
\caption{RARL Cartpole Parameters}
\label{tab:rarl_cartpole_hyperparams}
\begin{tabular}{l|l}
Parameter & Value \\ \hline
State Size & 4 \\
\fname\ Size & 2 \\
\fname\ Range & -2$\pi$, 2$\pi$ \\
Maximum Grad Norm & 0.5 \\
Total Number of Adversary and Learner Updates & 100 \\
Number of Learner Update Steps per Adversary Update & 8 \\
Number of Adversary Update Steps per Learner Update & 8 \\
Update Period & 256 \\
Outer Discount Factor $\gamma$ & 0.99 \\
Number of Epochs per Update & 16 \\
PPO Clipping $\epsilon$ & 0.2 \\
General Advantage Estimation $\lambda$ & 0.95 \\
Critic Coefficient & 0.5 \\
Entropy Coefficient & 0.01 \\
Learning Rate & 0.005 \\
Population Size & 1024 \\
Number of Generations & 2049 \\
Outer Agent (OA) Hidden Layers & 2 \\
OA Size of Hidden Layers & 64 \\
OA Hidden Activation Function & ReLU \\
OA Output Activation Function & Tanh \\
Inner Agent (IA) Actor Hidden Layers & 2 \\
IA Size of Actor Hidden Layers & 32 \\
IA Number of Critic Hidden Layers & 2 \\
IA Size of Critic Hidden Layers & 32 \\
IA Activation Function & Tanh \\
Number of Rollouts & 16 \\
\end{tabular}

\end{table}

\pagebreak

\section{Extra Visualisations}
\label{app:visualisations}

\begin{figure*}[hb!]
 \centering
 \begin{subfigure}[]{0.35\linewidth}
     \centering
	\includegraphics[width=\linewidth]{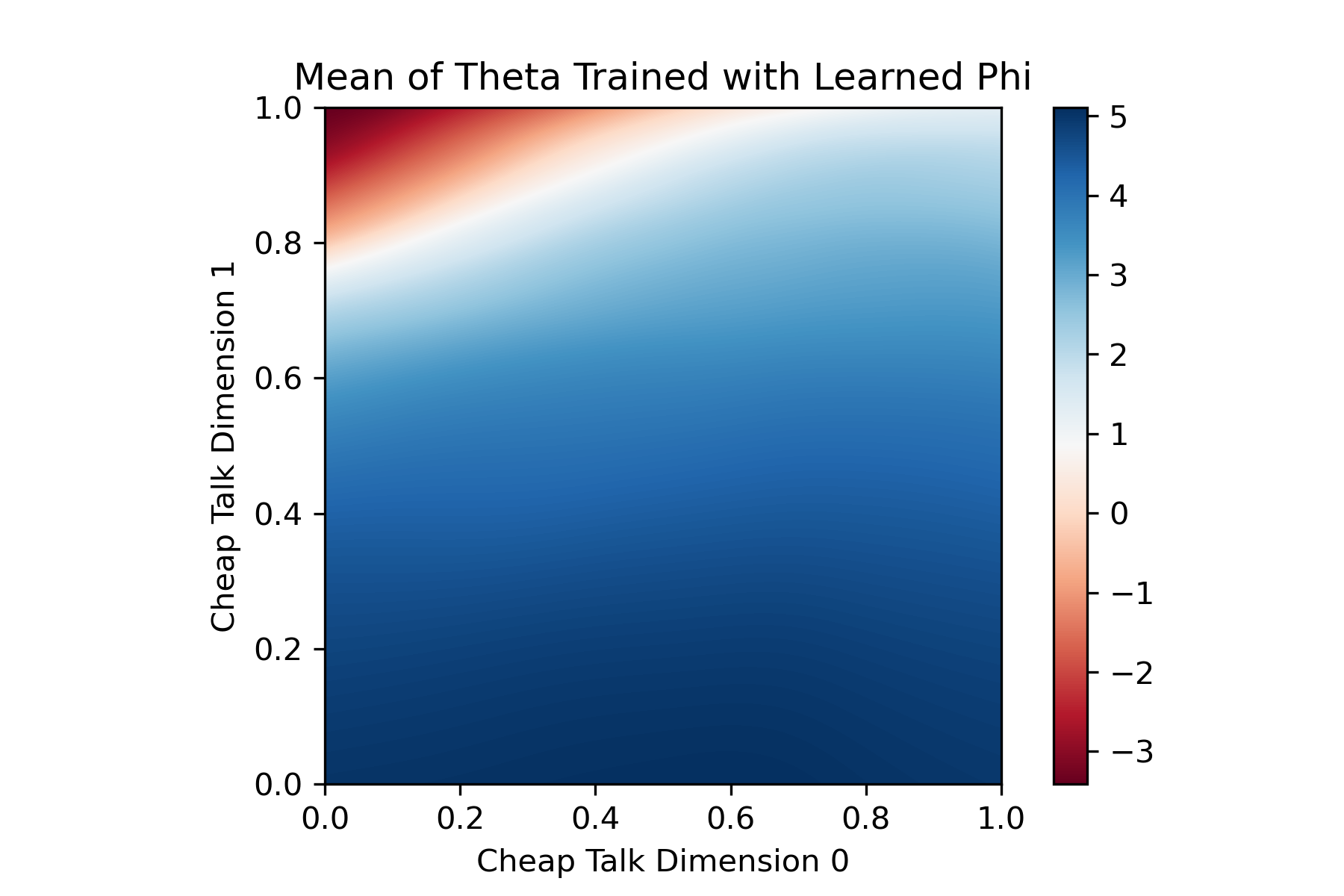}
	\caption{}
	\label{fig:mean_learned_0}
 \end{subfigure}
  \begin{subfigure}[]{0.35\linewidth}
     \centering
	\includegraphics[width=\linewidth]{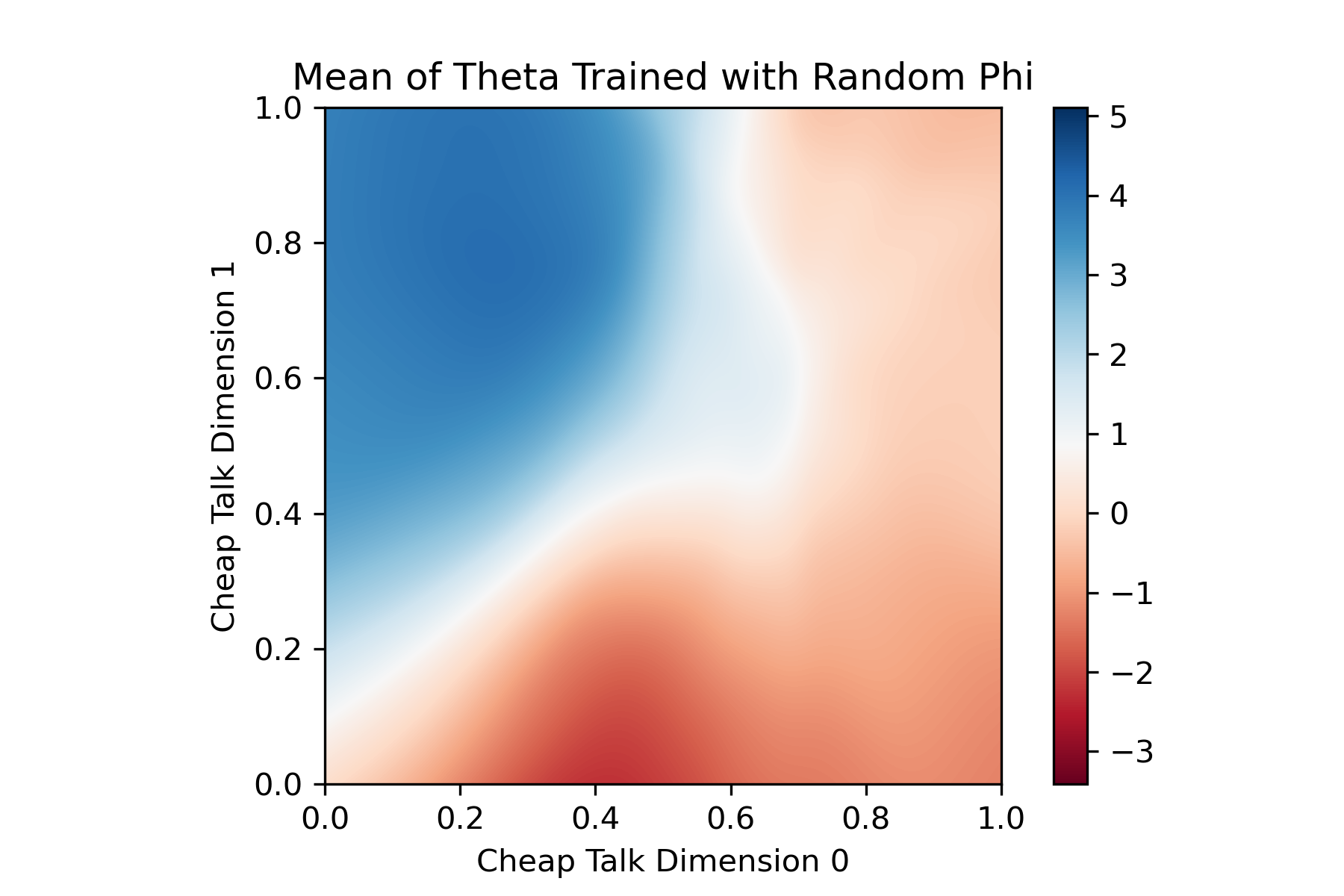}
	\caption{}
	\label{fig:mean_random_0}
 \end{subfigure}
  \begin{subfigure}[]{0.35\linewidth}
     \centering
	\includegraphics[width=\linewidth]{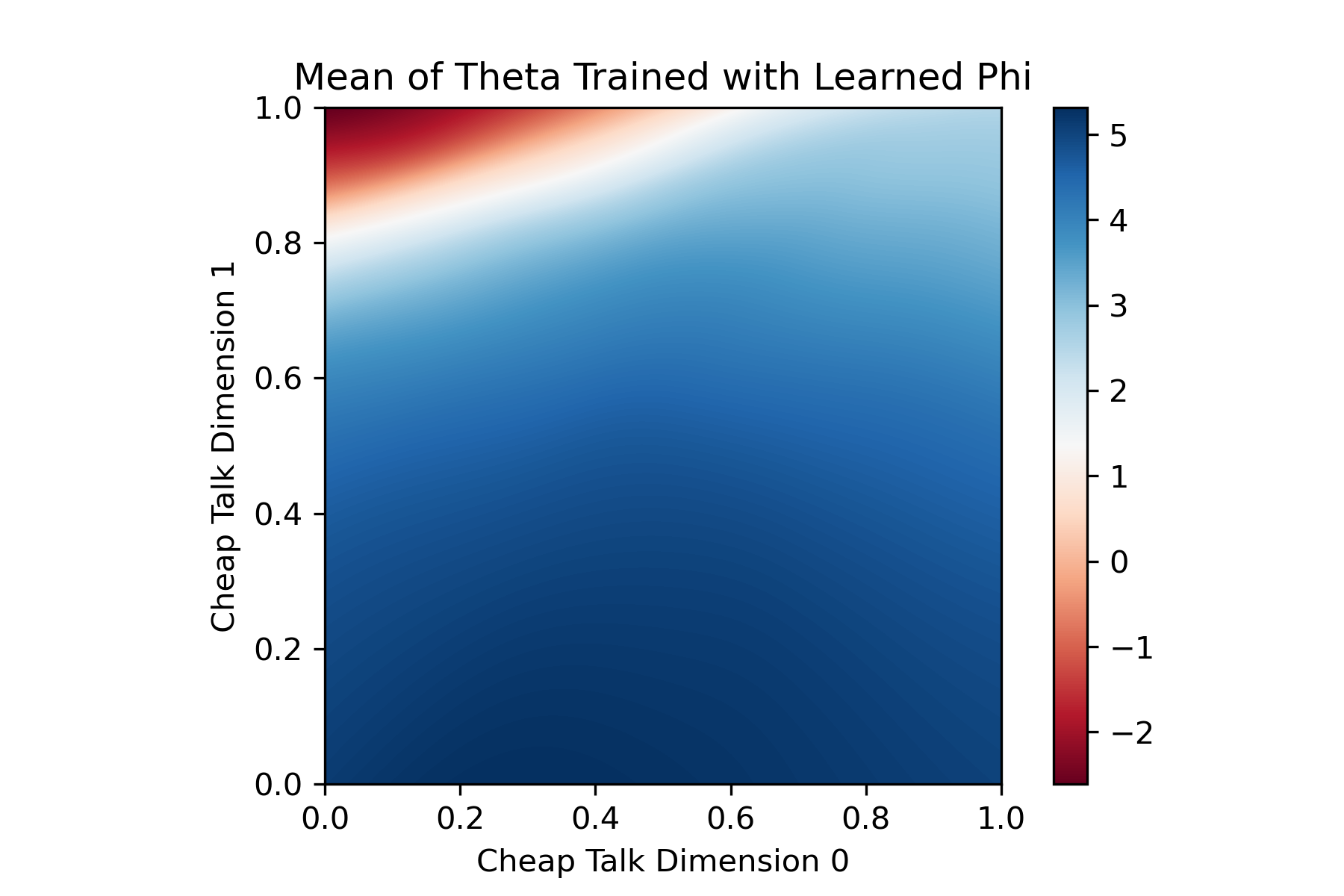}
	\caption{}
	\label{fig:mean_learned_1}
 \end{subfigure}
   \begin{subfigure}[]{0.35\linewidth}
     \centering
	\includegraphics[width=\linewidth]{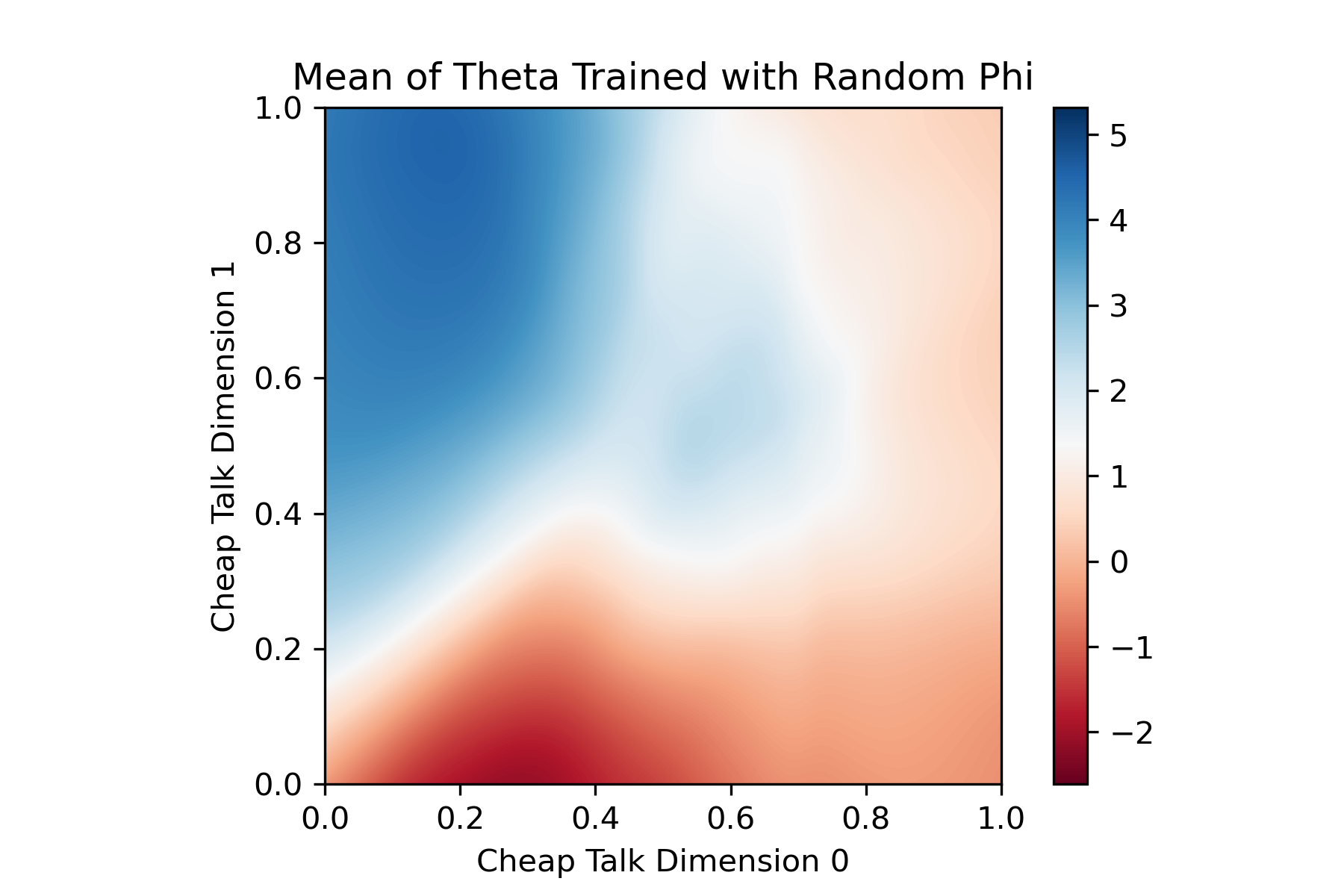}
	\caption{}
	\label{fig:mean_random_1}
 \end{subfigure}
  \begin{subfigure}[]{0.35\linewidth}
     \centering
	\includegraphics[width=\linewidth]{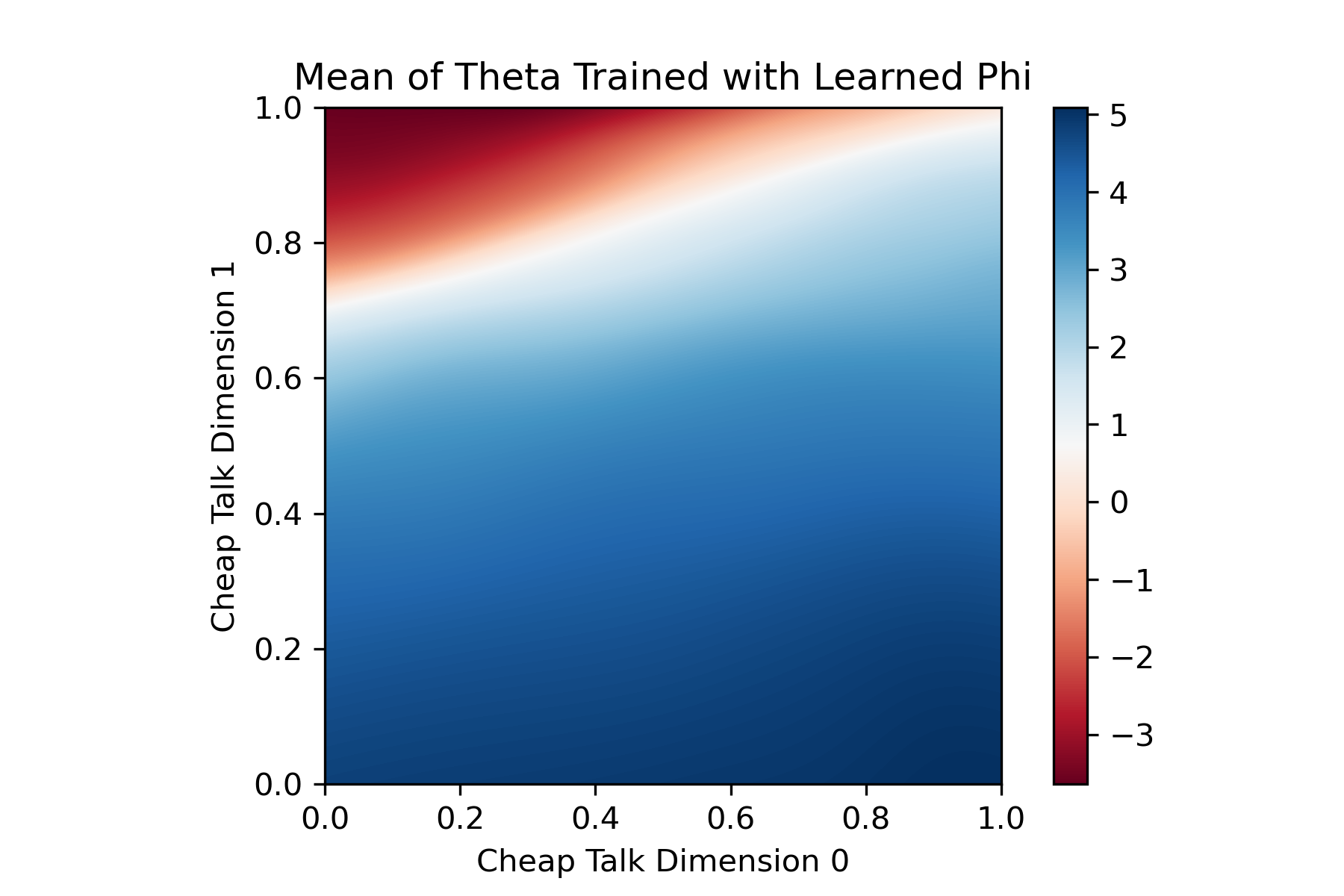}
	\caption{}
	\label{fig:mean_learned_2}
 \end{subfigure}
  \begin{subfigure}[]{0.35\linewidth}
     \centering
	\includegraphics[width=\linewidth]{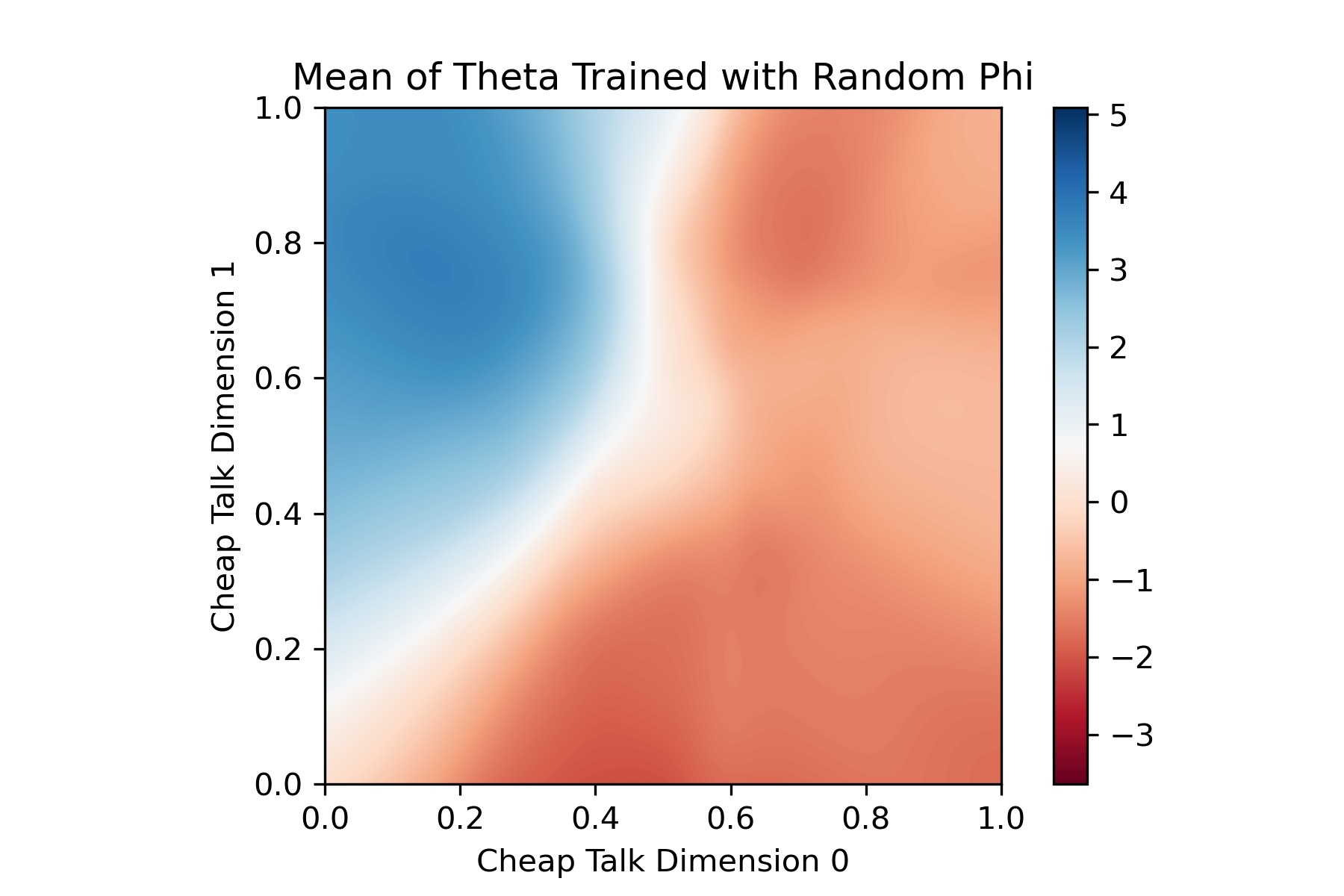}
	\caption{}
	\label{fig:mean_random_2}
 \end{subfigure}
  \begin{subfigure}[]{0.35\linewidth}
     \centering
	\includegraphics[width=\linewidth]{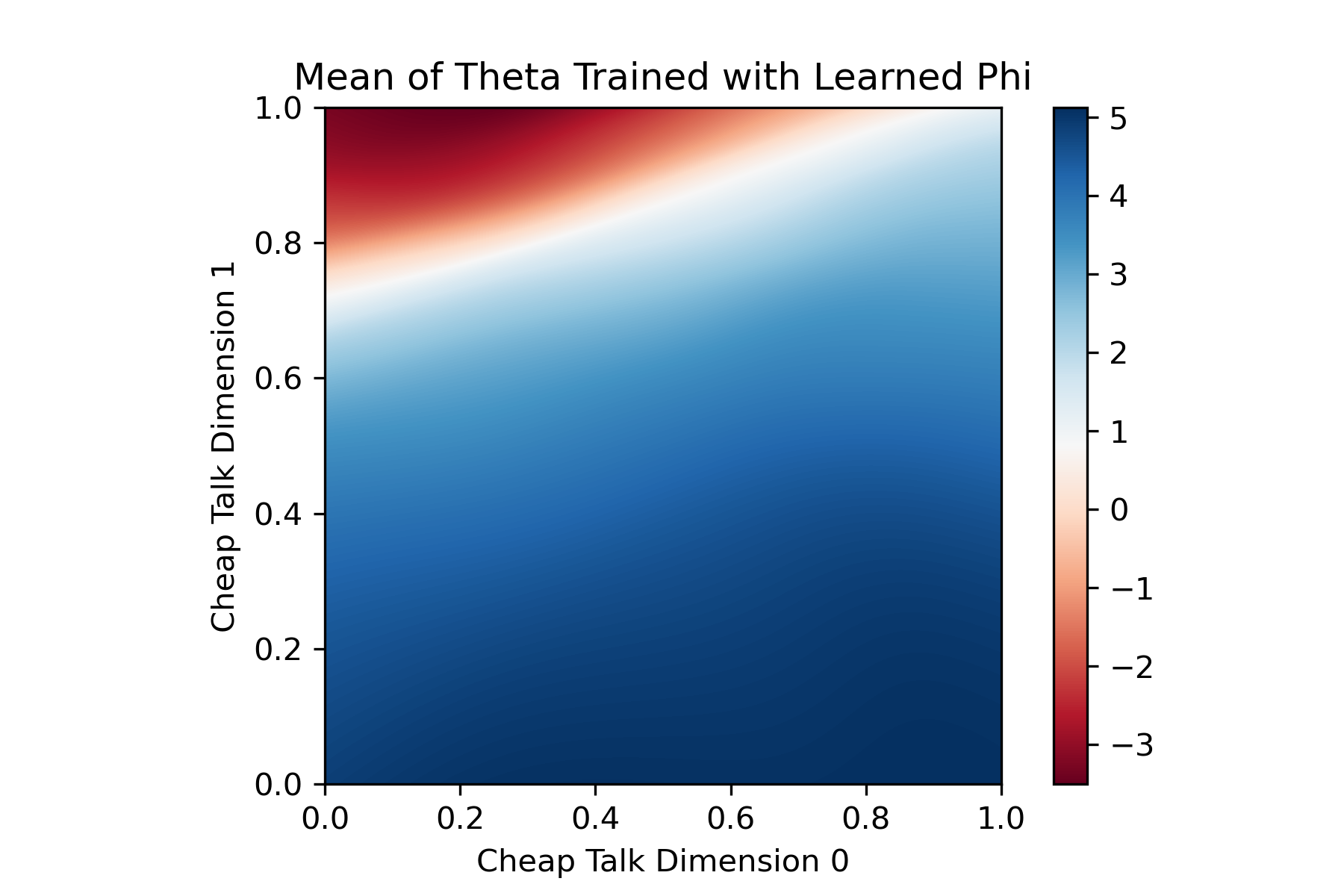}
	\caption{}
	\label{fig:mean_learned_3}
 \end{subfigure}
   \begin{subfigure}[]{0.35\linewidth}
     \centering
	\includegraphics[width=\linewidth]{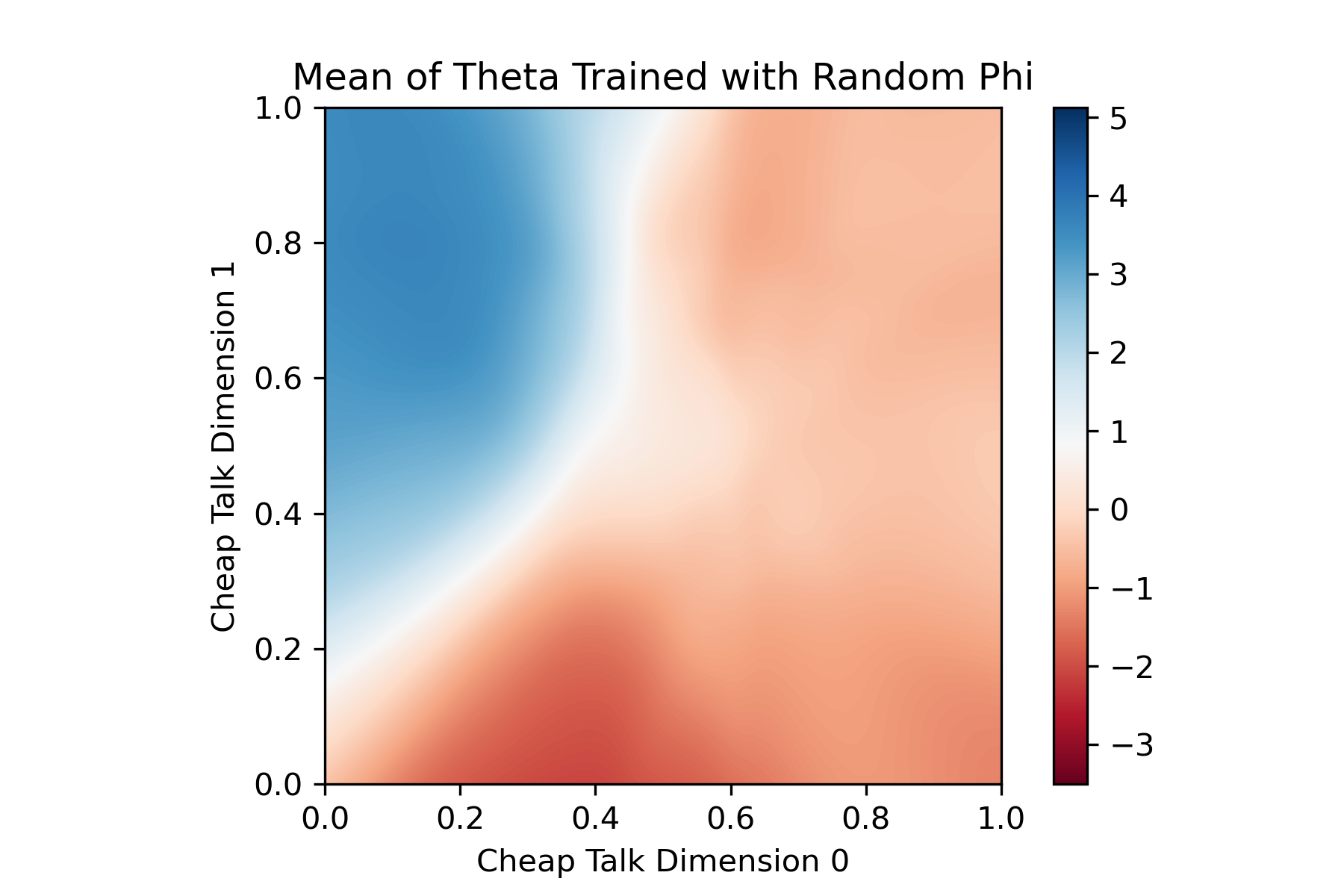}
	\caption{}
	\label{fig:mean_random_3}
 \end{subfigure}
    \caption{We train 10 different \learner s\ alongside the Learned $\phi$ (left column), and 10 different \learner s\ alongside a randomly generated $\phi$ (right column) in the Pendulum environment. We show the mean of the policy output across the 10 \learner s\ as we vary the value of the \fname\, in multiple randomly selected states. The learned $\phi$ consistently generates similar policy outputs across different states with respect to the cheap talk channel, implying that the learned $\phi$ shapes the \learner\ in a consistent way.
    }
    \label{fig:Pendulum_Vis_More}
\vspace{-15pt}
\end{figure*}

\pagebreak

\section{Additive Perturbations}
\label{app:additive-attack}

\begin{figure*}[hb]
 \centering
 \begin{subfigure}[]{0.32\linewidth}
     \centering
	\includegraphics[width=\linewidth]{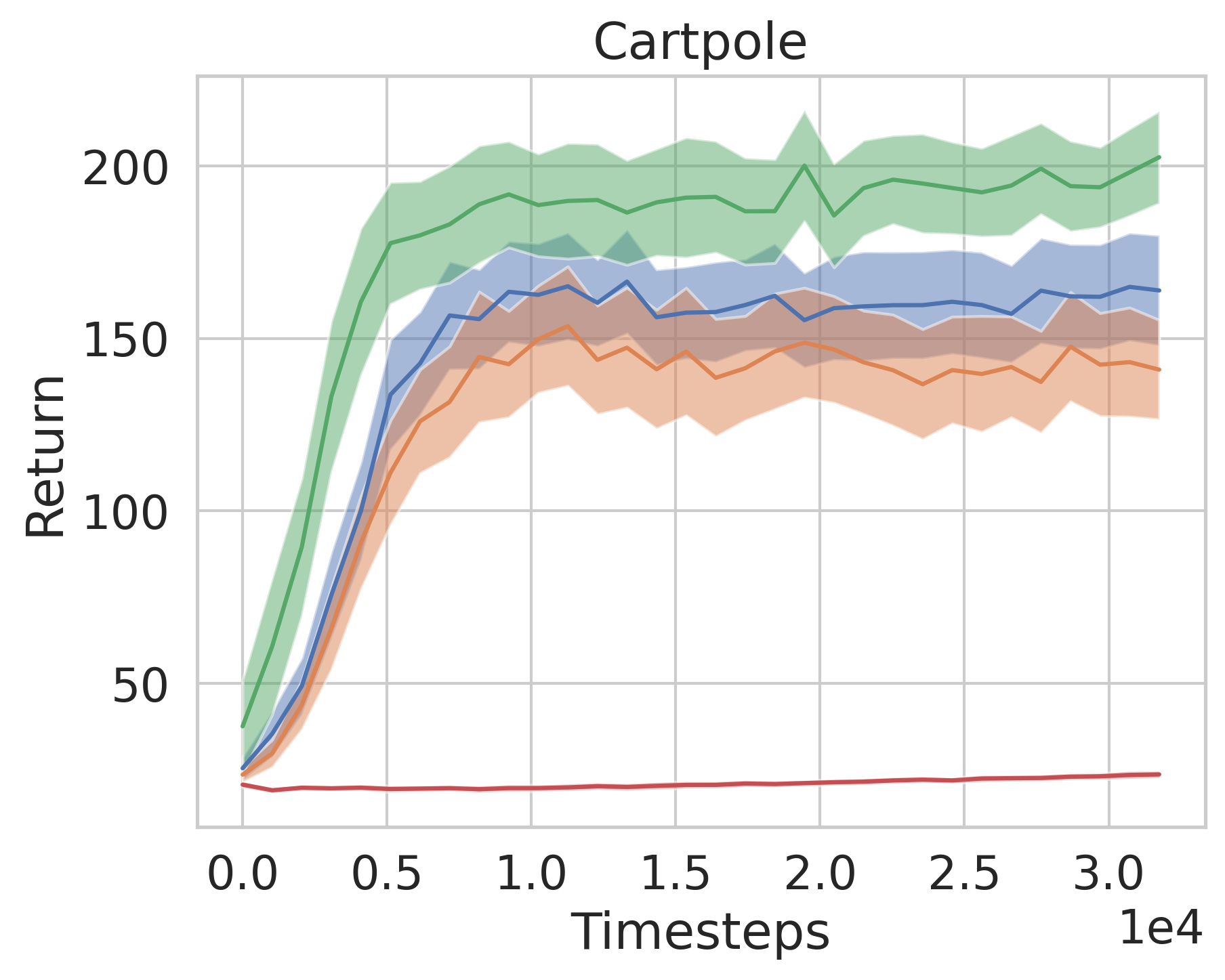}
	\caption{}
	\label{fig:cartpole_train_add}
 \end{subfigure}
  \begin{subfigure}[]{0.32\linewidth}
     \centering
	\includegraphics[width=\linewidth]{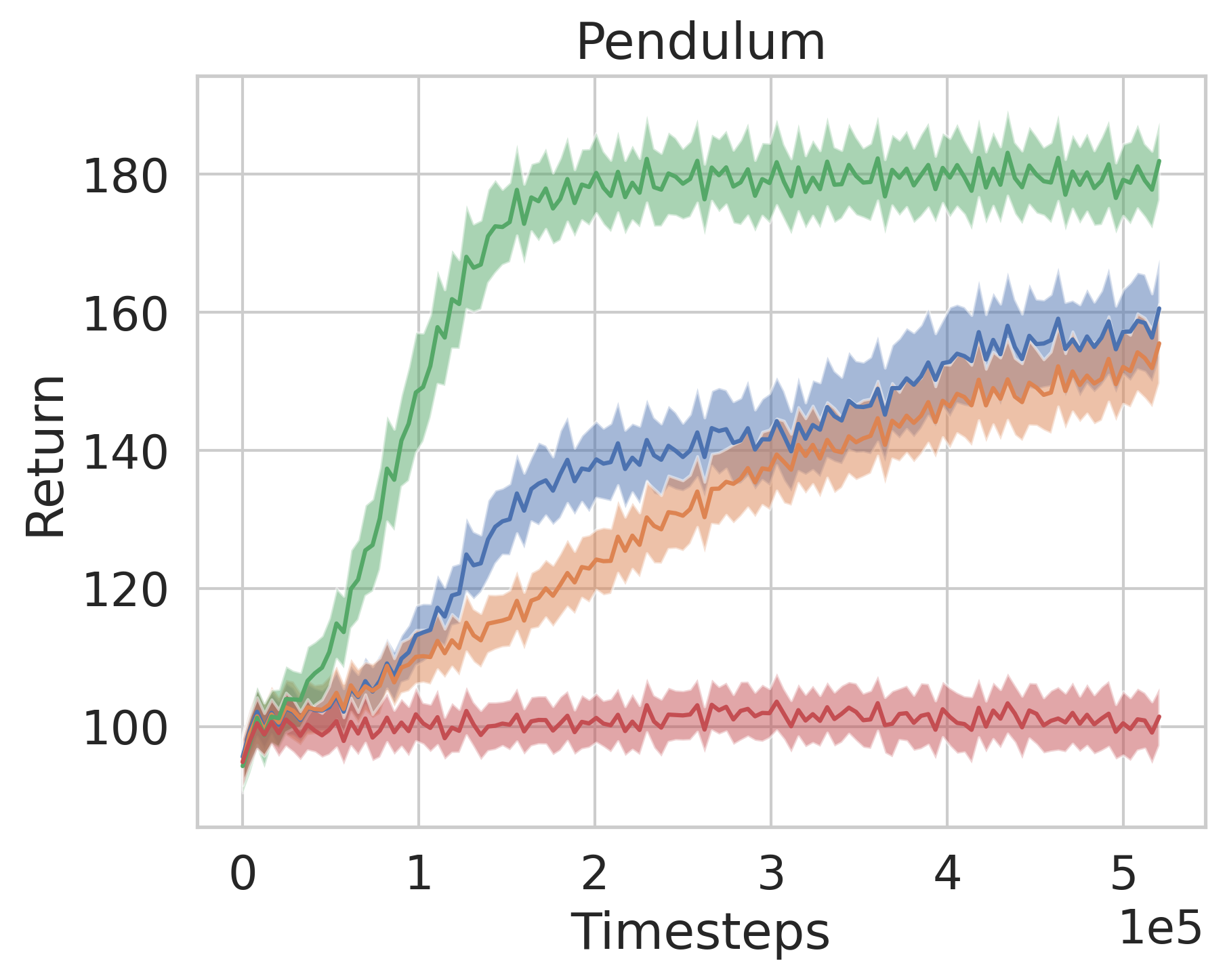}
	\caption{}
	\label{fig:pendulum_train_add}
 \end{subfigure}
  \begin{subfigure}[]{0.32\linewidth}
     \centering
	\includegraphics[width=\linewidth]{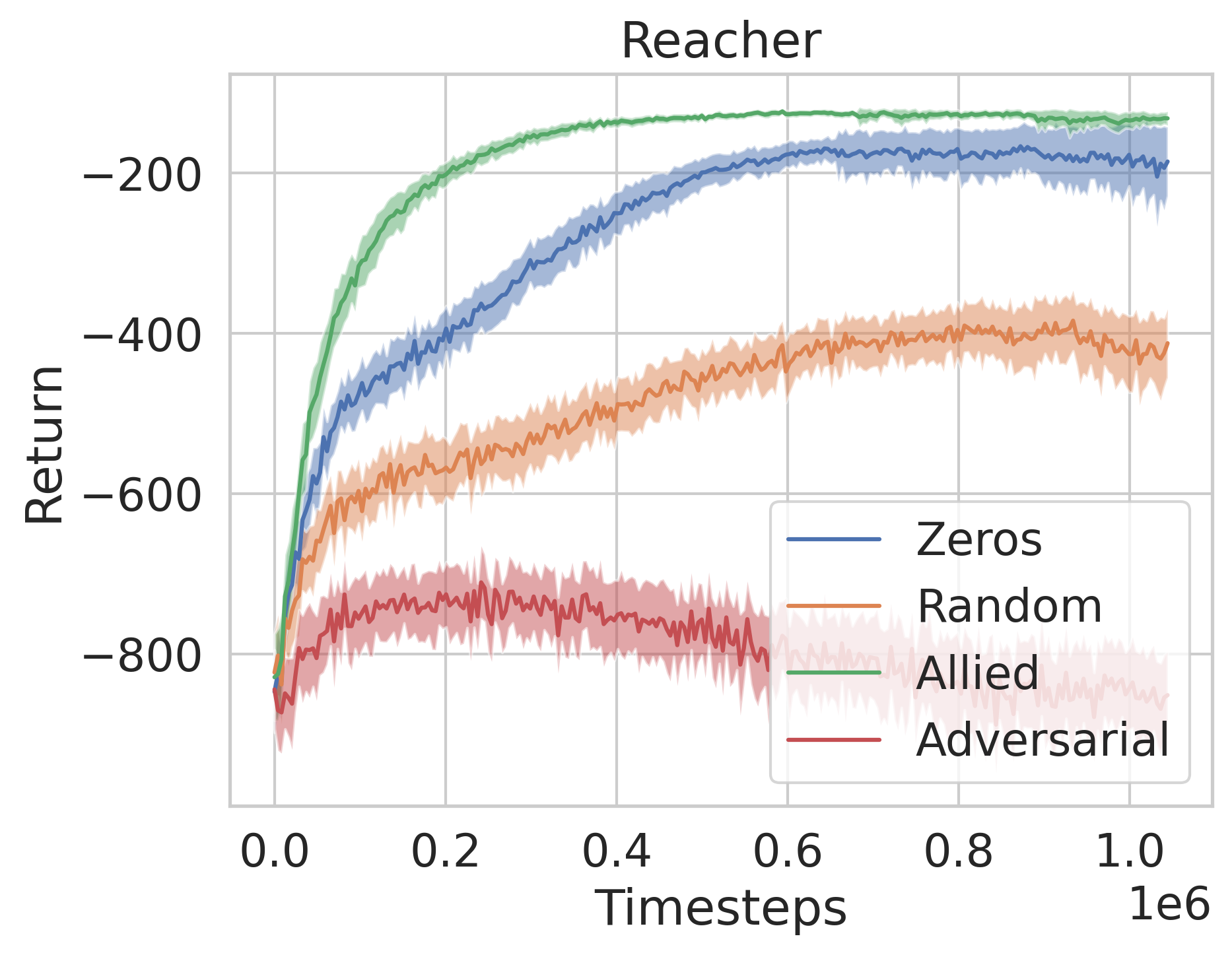}
	\caption{}
	\label{fig:reacher_train_add}
 \end{subfigure}
\caption{Visualisations of the training curves of the \learner\ across different Adversaries for (a) Cartpole, (b) Pendulum, and (c) Reacher. Error bars denote the standard error across $10$ seeds of Victims trained against a single trained Adversary. In this setting, the \shaper\ \textit{adds} the perturbation to the input rather than appending. Note that this allows the \shaper\ to conflate states and influence the optimal policy. Thus, the Adversary can harm performance far more.}
\label{fig:additive-ps}
\end{figure*}

\begin{figure*}[hb]
 \centering
 \begin{subfigure}[]{0.32\linewidth}
     \centering
	\includegraphics[width=\linewidth]{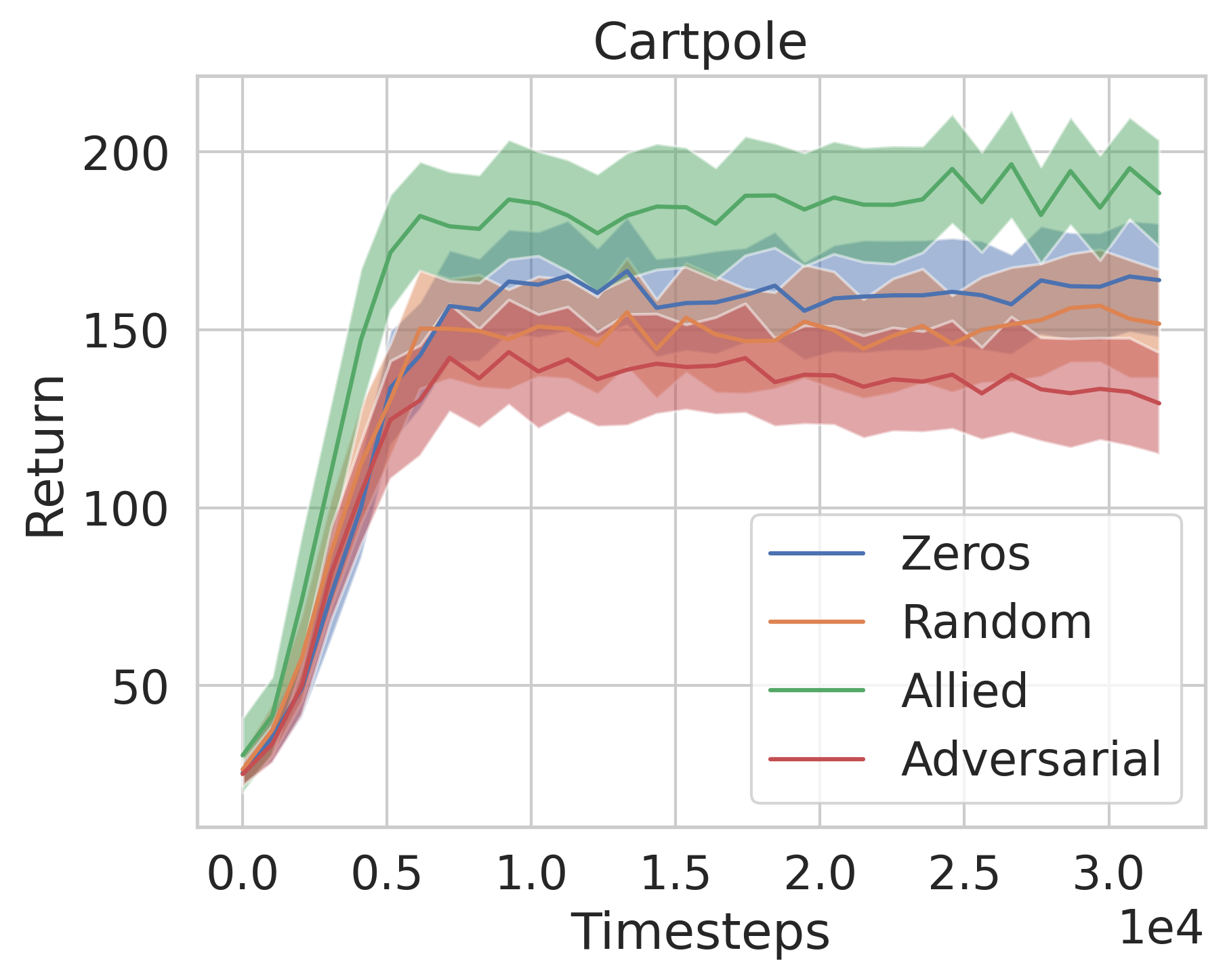}
	\caption{}
	\label{fig:cartpole_train_useless}
 \end{subfigure}
\caption{Visualisations of the training curves of the \learner\ across different Adversaries for Cartpole. Error bars denote the standard error across $10$ seeds of Victims trained against a single trained Adversary. In this setting, the \shaper\ \textit{adds} the perturbation to the useless features identified in \citep{lange2022lottery} rather than appending. It achieves similar performance to the cheap talk channel attacks.}
\label{fig:additive-useless-ps}
\end{figure*}

\pagebreak




\section{Frequently Asked Questions}
\label{app:faq}

\textbf{Q: How does the Adversary know which channels don't influence the underlying environment?}

The Adversary does not need to know which channels don't influence the environment. The attacks in this paper are effective \textit{regardless} of whether or not the features are useless. The cheap talk channels are intended to represent the \textit{worst-case scenario} for the Adversary. Indeed, if the Adversary is able to influence important and useful channels, it is able to execute far more effective attacks. We show this in Appendix \ref{app:additive-attack} Figure \ref{fig:additive-ps}, where the Adversary, instead of appending to the observation, adds to it. 

\textbf{Q: Why would the Victim allow the Adversary to append to its observation? Isn't that easily detectable?}

The idea is not that a system designer would add a cheap-talk channel to their system but rather that such cheap-talk channels naturally occur in many real-world scenarios. We mention some real-world situations in the introduction (e.g. the recommender systems or financial models). However, practitioners currently also train with useless features in existing RL environments. Interestingly, \citet{lange2022lottery} has found that even the simple CartPole environment only requires two of the four features to learn the optimal policy.

We simulate these settings by appending to the observation; however, we could just as easily perturb the “useless” observations and expect similar results. We perturb the aforementioned useless features in CartPole and show the results in Appendix \ref{app:additive-attack} Figure \ref{fig:additive-useless-ps}.

\textbf{Q: What is the incentive for the attacker to be stationary?}

In real-world attacks, it would be far easier to implement a stationary adversary since it would just be a static function of the rest of the state. To implement a non-stationary attack, the adversary would need to know at what stage in training the victim is and at what point the training started.

Furthermore, we think the stationary adversarial attack is more relevant for studying curious failure modes in RL in general. This is a common perspective taken in adversarial attacks in machine learning since most popular attacks are impractical.

\end{document}